\documentclass{article}

\usepackage[preprint]{neurips_2026}

\usepackage{algorithm}
\usepackage{algpseudocode}
\usepackage{xspace}
\usepackage{booktabs}
\usepackage[dvipsnames]{xcolor}

\usepackage{amsmath}

\usepackage{booktabs}     
\usepackage{graphicx}

\usepackage[dvipsnames]{xcolor}

\usepackage[utf8]{inputenc} 
\usepackage[T1]{fontenc}    
\usepackage{hyperref}       
\usepackage{url}            
\usepackage{booktabs}       
\usepackage{amsfonts}       
\usepackage{nicefrac}       
\usepackage{microtype}      
\usepackage{xcolor}         

\usepackage{algorithmicx}

\usepackage{amsmath}
\usepackage{amssymb}
\usepackage{mathtools}
\usepackage{amsthm}
\usepackage[table]{xcolor}

\newcommand{\CAST}{\textit{CAST}\xspace}
\newcommand{\CASTone}{\textit{CAST-1}\xspace}
\newcommand{\CASTtwo}{\textit{CAST-2}\xspace}
\newcommand{\AC}{\textit{AC}\xspace}
\newcommand{\ACtwo}{\textit{AC2}\xspace}

\newcommand{\calO}{\mathcal{O}}
\newcommand{\bigO}{\calO}

\theoremstyle{plain}
\newtheorem{theorem}{Theorem}[section]
\newtheorem{proposition}[theorem]{Proposition}
\newtheorem{lemma}[theorem]{Lemma}
\newtheorem{corollary}[theorem]{Corollary}

\newtheorem{question}[theorem]{Question}
\newtheorem*{question*}{Question}

\renewcommand{\Pr}[1]{\mathbb{P}\left(#1\right)}

\title{CAST: Canonical Approximate Schur Tree for Approximate Cholesky on Graphs}

\author{
  Meher Chaitanya\\
  \texttt{mcpi@kth.se}\\
  KTH Royal Institute of Technology\\
  Stockholm, Sweden\\
  \And
  Cameron Musco\\
  \texttt{cmusco@cs.umass.edu}\\
  University of Massachusetts Amherst\\
  Amherst, Massachusetts, USA\\
  \AND
   Aristides Gionis\\
\texttt{argioni@kth.se}\\
KTH Royal Institute of Technology\\
  Stockholm, Sweden\\
}

\begin{document}

\maketitle

\begin{abstract}

Graph-data workloads such as diffusion estimation, ranking, semi-supervised learning, and network optimization often solve many Laplacian or symmetric diagonally dominant M-matrix (SDDM) systems with the same coefficient matrix. Approximate Cholesky preconditioners eliminate vertices one at a time and store the resulting sparse approximate factorization, the \emph{factor}, whose construction cost is amortized across these solves. But eliminating a vertex, the \emph{pivot}, creates a dense
Schur-complement clique among its $d$ active neighbors. We introduce \CAST{} (Canonical Approximate Schur Tree), which replaces this clique with a weighted random spanning tree sampled directly from it. Every realization is connected and contains exactly $d-1$ edges, while reweighting each selected edge by the reciprocal of its tree-inclusion probability makes the update unbiased. The distribution is independent of the ordering of the pivot neighbors, and we prove that its leverage-score marginals minimize the largest normalized reweighted-edge contribution among unbiased inverse-marginal one-tree estimators.

We also introduce \CAST-$\rho$, which replaces each pivot neighbor with $\rho$ copies, each carrying a $1/\rho$ share of that neighbor's incident weight, samples a weighted random spanning tree on the expanded clique, and contracts the copies back to the original neighborhood. The resulting update remains unbiased and connected, can be sampled exactly in $O(\rho d)$ time, and satisfies a $1/\rho$ bound on the second moment of the normalized local Schur
error. Increasing $\rho$ therefore reduces certified local sampling
variability, but may increase construction cost and downstream fill.

We evaluate \CAST{} on $201$ matrix systems, including $168$ Newton-step Laplacians from maximum-flow interior-point methods, constructing one factor and reusing it for $250$ right-hand sides.  \CAST-$\rho$ is $1.11$--$4.43\times$ faster than the better of the state-of-the-art \AC{} and \ACtwo{} baselines~\cite{AC}. Overall, we observe that \CASTone{} is the faster default, whereas \CASTtwo{} is preferable when its additional edge contributions remain inexpensive.

\end{abstract}

\maketitle

\section{Introduction}
Many graph-data and network-optimization pipelines repeatedly solve linear systems defined by a fixed graph operator. 
Examples include ranking and diffusion estimation~\cite{page1999pagerank,andersen2006local}, label propagation~\cite{zhu2003semi,zhou2003learning}, shift-and-invert spectral graph methods~\cite{huang2019isira}, and network optimization~\cite{spielman2004nearly,spielman2008graph}. 
These applications lead to Laplacian or symmetric diagonally dominant M-matrix (SDDM) systems in which the operator is \emph{reused} across many right-hand sides. The relevant computational objective is therefore not only to solve one system quickly, but to construct a pre\-conditioner whose cost can be amortized across repeated solves.

Approximate Cholesky factorization provides a practical approach to this problem. As in sparse Gaussian elimination, vertices are eliminated sequentially, but the dense fill created by exact
elimination is replaced by sparse randomized
updates~\cite{kyng2016approximate,AC}. The resulting approximate
factor is used as a conjugate-gradient pre\-conditioner and reused
across multiple right-hand sides. Eliminating a vertex $v$ of degree $d$,
with incident edge weights $a_1,\ldots,a_d$ and total weight
$a=\sum_{i=1}^{d} a_i$, creates the Schur-complement clique
\[
K_v \;=\; \sum_{1\leq i<j\leq d}
\frac{a_i a_j}{a}\,
(\mathbf e_i-\mathbf e_j)
(\mathbf e_i-\mathbf e_j)^\top,
\]
where $\mathbf e_i$ denotes the standard basis vector of the
$i$-th neighbor. This clique contains $\Theta(d^2)$ edges. Approximate Cholesky methods avoid this quadratic fill by replacing $K_v$ with a random surrogate containing only $\bigO(d)$ edges. 

Kyng and Sachdeva sample clique edges independently and control the accumulated error through a matrix-martingale analysis~\cite{kyng2016approximate}. The practical \AC{} and \ACtwo{} solvers~\cite{AC} take a complementary approach: every sampled local update has connected \emph{support}, the graph on the $d$ pivot neighbors formed by its nonzero-weight edges. A graph Laplacian has nullspace
$\operatorname{span}\{\mathbf 1\}$ exactly when its support is connected,
matching the exact Schur clique. Since a spanning tree is the sparsest connected support, with exactly $d-1$ edges, we first study surrogates consisting of a single spanning tree $T$ of the clique. 

For a clique edge $e$ with weight $w_e$, let $p_e=\Pr{e\in T}>0$ be its
inclusion probability, and assign $e$ the weight $w_e/p_e$ whenever it is
selected. This \emph{inverse-marginal reweighting} makes the surrogate
unbiased, since the expected contribution of $e$ is $p_e(w_e/p_e)=w_e$. The remaining choice is the distribution over trees. If $p_e$ is small, edge $e$ receives a correspondingly large weight when selected. Let $\tau_e$ be the leverage score of $e$ in the clique, equal to its weight times its effective resistance within $K_v$~\cite{spielman2008graph}. The normalized size of the reweighted contribution---its size measured
relative to the quadratic form of $K_v$ itself---is exactly $\tau_e/p_e$; see Section~\ref{sec:preliminaries}. A large ratio lets one selected edge dominate the local update and inflates the error bounds obtained from matrix-concentration analyses~\cite{kyng2016approximate}. We therefore ask:

\begin{question}\label{que:1}
Among unbiased inverse-marginal one-tree replacements of a Schur clique, which edge-inclusion probabilities minimize the worst normalized contribution $\max_e \tau_e/p_e$?
\end{question}

The connected updates of \AC{} and \ACtwo{} are generated through randomized sequential edge pairing~\cite{AC}. Although these methods perform strongly across many SDDM systems, their induced sampling distributions depend on the order in which the incident edges are processed and are not designed to minimize $\max_e\tau_e/p_e$. The distinguishing feature of our method is therefore not connectivity alone, but the distribution used to generate the local update.

We introduce \emph{\CAST} (Canonical Approximate Schur Tree): at each
elimination step, it samples a weighted random spanning tree of the Schur clique and applies inverse-marginal reweighting. By the transfer-current theorem~\cite{lyons2003determinantal}, each clique edge is then included with probability equal to its leverage score, $p_e=\tau_e$. This answers Question~\ref{que:1}: the leverage-score marginals are the unique locally minimax-optimal choice among unbiased inverse-marginal one-tree estimators (Theorem~\ref{thm:minimax}).  We call the sampling rule \emph{canonical} because it depends only on the weighted Schur clique and not on the ordering of the pivot
neighbors.

We further introduce \CAST-$\rho$, with integer parameter $\rho\geq 1$, which splits each pivot neighbor into $\rho$ equal-share auxiliary copies, samples one weighted random spanning tree from the expanded Schur clique, and contracts the copies back to the original neighbors.  For $\rho=1$ it
coincides with \CAST{}; for $\rho>1$ the contracted update need not be a
tree, but it remains unbiased and connected and satisfies a local
second-moment bound of order $1/\rho$, reducing sampling variability at
the cost of additional construction work and possible downstream fill. We
write \CASTone{} and \CASTtwo{} for the $\rho=1$ and $\rho=2$
configurations used in our evaluation.

Our guarantees concern the estimator associated with a single Schur-clique replacement; we do not claim a worst-case spectral guarantee for the complete approximate factorization.

In summary, we make the following contributions:

\begin{enumerate}
\item \textbf{Canonical tree sampling and its splitting extension.}
We introduce \CAST{} and \CAST-$\rho$. At a degree-$d$ pivot, a tree is sampled exactly from the target weighted spanning-tree distribution and contracted in $\bigO(\rho d)$ time using weighted Pr\"ufer codes, without materializing either the original or the expanded dense Schur clique.

\item \textbf{Local minimax and second-moment guarantees.}
For the base $\rho=1$ update, the leverage-score vector is the unique
marginal vector minimizing the largest normalized reweighted-edge
contribution among unbiased inverse-marginal one-tree estimators
(Theorem~\ref{thm:minimax}). For general $\rho$, \CAST-$\rho$ remains
unbiased and connected and satisfies a normalized local second-moment bound of order $1/\rho$, quantifying the reduction in sampling variability obtained through splitting (Section~\ref{sec:local-theory}).

\item \textbf{Reuse-aware evaluation.}
We evaluate \CAST{} by constructing one preconditioner and reusing it across many right-hand sides. Across heterogeneous SDDM systems and max flow IPM sequences, our experiments identify when \CAST{} reduces repeated-solve cost relative to \AC{} and \ACtwo{}. We further characterize when increasing $\rho$ improves convergence and when its additional edge contributions
create excessive downstream fill.
\end{enumerate}

\paragraph{Independent contemporaneous work.}
We recently became aware of independent work by Baumann, Kyng, and Zöcklein~\cite{baumann2026vac}, who propose, under the name VAC, the same base $\rho=1$ local sampling rule as CAST-1: a weighted random spanning tree of the product-form Schur clique sampled via a weighted Pr"ufer code and equipped with inverse-marginal reweighting. Their work was developed independently of ours and establishes linear-time sampling rule. Our work additionally gives a local minimax characterization of the leverage-score marginals, introduces the CAST-$\rho$ splitting construction and its $1/\rho$ local second-moment guarantee, and evaluates these estimators as reusable preconditioners across a broad collection of SDDM systems.

\section{Preliminaries and Problem Setting}
\label{sec:preliminaries}
We formalize SDDM linear systems, the factor-reuse
setting, and the local Schur-clique replacement problem underlying~\CAST. The preliminaries on Pr\"ufer codes are provided in Appendix~\ref{app:prufer-background}.

\paragraph{\textbf{Graph Laplacians and SDDM systems}.}

Let $G=(V,E,w)$ be an undirected weighted graph with $n=|V|$ vertices. For an oriented edge $e=(u,v)$ write $\mathbf{b}_e=\mathbf{e}_u-\mathbf{e}_v$, where $\mathbf{e}_u$ is the $u$-th standard basis
vector. The Laplacian of $G$ is $L_G=\sum_{e\in E} w_e\, \mathbf{b}_e \mathbf{b}_e^\top$. If $G$ is connected, then $\ker(L_G)=\operatorname{span}\{\mathbf 1\}$.
Since $L_G$ is symmetric,
$\operatorname{range}(L_G)=\ker(L_G)^\perp=\mathbf 1^\perp$, so the system
$L_G\mathbf x=\mathbf b$ is solvable if and only if
$\mathbf b\perp\mathbf 1$. 

Throughout, $A$ denotes the coefficient matrix of the system to be
solved: either a graph Laplacian $L_G$ or a symmetric diagonally
dominant M-matrix (\emph{SDDM matrix}), i.e.\ a symmetric
positive-definite matrix with nonpositive off-diagonal entries
satisfying $A_{ii}\geq\sum_{j\neq i}|A_{ij}|$.
 Equivalently, an SDDM matrix can be written as
$A=L_G+\Gamma$, where $L_G$ is a graph Laplacian and
$\Gamma\succeq 0$ is a diagonal matrix. The matrix $A$ is positive definite
provided that $\Gamma$ has at least one positive entry on each
connected component of $G$. This class of matrices includes many shifted and regularized graph operators.  For example, let \(W \in \mathbb{R}^{n \times n}\) be a symmetric, nonnegative weighted adjacency matrix, let $D = \operatorname{diag}(W\mathbf{1})$ be its degree matrix, and let $L = D-W$ be a graph Laplacian. For a damping parameter $\alpha \in (0,1)$ and a seed distribution $\mathbf{s} \in \mathbb{R}^{n}$, the personalized PageRank vector \(\mathbf{x}\) satisfies $(D-\alpha W)\mathbf{x}=(1-\alpha)D\mathbf{s}$. The coefficient matrix is SDDM because
$D-\alpha W =\alpha(D-W)+(1-\alpha)D =\alpha L+(1-\alpha)D$  is symmetric, has non\-positive off-diagonal entries, and is diagonally dominant~\cite{gleich2015pagerank}. If every vertex has positive weighted degree, the matrix is also positive definite. In repeated-query settings, the operator $D-\alpha W$ remains fixed while the seed distribution $\mathbf{s}$ varies across queries.

\paragraph{\textbf{The factor-reuse setting}.}
Many applications solve a sequence of linear systems $A\,\mathbf{x}^{(r)}=\mathbf{b}^{(r)}$, for $r=1,\ldots,q$, in which the coefficient matrix $A$ remains fixed while the right-hand side changes: diffusion from different seed sets, label propagation across classes, effective-resistance queries for many
source--sink pairs, and inner systems in iterative optimization.
An approximate Cholesky method performs the elimination once and stores the result as a sparse approximate factorization. Recall that the exact Cholesky decomposition of a positive-definite matrix is
$A=\mathcal{C}\mathcal{C}^\top$ with $\mathcal{C}$ lower triangular;
approximate elimination instead produces a sparse
$\widehat{\mathcal{C}}$ with
$\widehat{\mathcal{C}}\widehat{\mathcal{C}}^\top\approx A$, which we call the \emph{factor}. The factor is stored implicitly as the elimination data recorded at each pivot and is applied through forward and backward substitutions as a preconditioner for each right-hand side. The factor is constructed at cost $T_{\mathrm{build}}$ and reused for all
$q$ solves. Writing $\overline{T}_{\mathrm{solve}}$ for the average time of one preconditioned-CG solve using the factor, the total and amortized workload costs are

\begin{equation*}
  T_q^{\mathrm{total}}
  = T_{\mathrm{build}}
    + q\,\overline{T}_{\mathrm{solve}},
  \text{ and }
  T_q^{\mathrm{amort}}
  = \frac{T_q^{\mathrm{total}}}{q}
  = \frac{T_{\mathrm{build}}}{q}
    + \overline{T}_{\mathrm{solve}}.
\end{equation*}
Thus $q=1$ measures one-shot performance, while increasing $q$
amortizes construction and captures the benefit of factor reuse.

\paragraph{\textbf{Exact elimination and Schur cliques}}
For the local elimination analysis, we work with a Laplacian
representation; general SDDM systems can be reduced to this form
using the standard Gremban expansion~\cite{gremban1996combinatorial}. Approximate Cholesky methods eliminate vertices one at a time.
Suppose a pivot $v$ has active neighbors $u_1,\ldots,u_d$ with
$a_i=w_{vu_i}$ and $a=\sum_i a_i$. Eliminating $v$ exactly removes
the star at $v$ and adds the \emph{Schur clique}
\begin{equation}\label{eqn:pivot_clique}
 K_v = \sum_{1\le i<j\le d}\frac{a_i a_j}{a}\,
  (\mathbf{e}_i-\mathbf{e}_j)(\mathbf{e}_i-\mathbf{e}_j)^\top
\end{equation}
to the remaining graph, where $\mathbf{e}_i$ denotes the $i$-th standard basis vector in the coordinate system indexed by $u_1,\ldots,u_d$; edges already present among the neighbors simply accumulate the corresponding weight. The clique contains $\binom{d}{2}=\Theta(d^2)$ edges, even when the original graph is sparse. To avoid ``filling up'' the matrix, approximate Cholesky methods replace $K_v$ by a sparse randomly-sampled update $\widehat K_v$ satisfying $\mathbb{E}[\widehat K_v] = K_v$~\cite{kyng2016approximate,AC}.  
The \AC and \ACtwo solvers construct unbiased connected local updates by randomized edge pairing, with \ACtwo doubling the per-edge sampling budget for robustness~\cite{AC}. These sampling distributions, which have been empirically studied, depend on the order in which the neighbor edges are processed. In contrast, the sampling distribution in \CAST is determined by the weighted Schur clique and admits the local minimax and second-moment guarantees established in Section~\ref{sec:local-theory}.

\paragraph{\textbf{The local Schur-clique replacement problem}.}
We write $K^+$ for the Moore--Penrose pseudoinverse and
$K^{+/2}=(K^+)^{1/2}$. Let
$K=\sum_{e\in E_K} w_e\,\mathbf b_e\mathbf b_e^\top$ be a connected
\emph{local} graph Laplacian: a $d\times d$ matrix indexed by the $d$
active neighbors of a pivot rather than by all $n$ vertices, with edge
set $E_K$ and weights $w_e>0$. Its \emph{support graph} is $([d],E_K)$,
the graph formed by its positive-weight edges; since it is connected,
$\ker(K)=\operatorname{span}\{\mathbf 1\}$ and
$\operatorname{rank}(K)=d-1$. We measure a positive-semidefinite local contribution $H$ relative to $K$
by $\lVert K^{+/2}HK^{+/2}\rVert=\max_{\mathbf x\perp\mathbf 1}
(\mathbf x^\top H\mathbf x)/(\mathbf x^\top K\mathbf x)$, the largest ratio of $H$-energy to $K$-energy, and call this normalization the \emph{energy geometry} of $K$. The \emph{leverage score} of edge $e$ is $\tau_e=w_e\,\mathbf b_e^\top K^+\mathbf b_e$, which is exactly the size of that edge's own term $w_e\mathbf b_e\mathbf b_e^\top$ in this geometry~\cite{spielman2008graph,kyng2016approximate}; leverage scores satisfy $\sum_e\tau_e=\operatorname{rank}(K)=d-1$.

For a distribution $\mathcal D$ over spanning trees of the support graph of $K$, let $p_e=\mathbb{P}_{T\sim\mathcal D}(e\in T)$, and assume $p_e>0$ for every $e\in E_K$. The inverse-marginal estimator
\begin{equation*}
  \widehat K_{\mathcal{D}}(T)=\sum_{e\in T}\frac{w_e}{p_e}\,
  \mathbf b_e\mathbf b_e^\top
\end{equation*}
is unbiased, since
$\mathbb E[\widehat K_{\mathcal D}(T)]
 =\sum_e p_e(w_e/p_e)\,\mathbf b_e\mathbf b_e^\top=K$. We measure the
\emph{local risk} of $\mathcal D$ by the largest normalized contribution
of any sampled edge,
\begin{equation}\label{eq:risk}
R_K(\mathcal{D}) = \max_{e\in E_K}\left\lVert K^{+/2}\!\left(
\frac{w_e}{p_e}\mathbf b_e\mathbf b_e^\top\right)\!K^{+/2}\right\rVert
= \max_{e\in E_K}\frac{\tau_e}{p_e},
\end{equation}
where the second equality holds because each normalized edge
contribution is rank one. Question~\ref{que:1} is then the following
general design problem: among such $\mathcal D$, which minimizes
$R_K(\mathcal D)$? Since $R_K$ depends on $\mathcal D$ only through its
marginals $(p_e)_{e\in E_K}$, this is equivalently a question about the
feasible marginal vectors of spanning-tree distributions. \CAST{} uses
the weighted random spanning-tree distribution of $K$, whose marginals
satisfy $p_e=\tau_e$ by the transfer-current
theorem~\cite{lyons2003determinantal}; this equalizes the normalized size
of every sampled edge and, as we show in Theorem~\ref{thm:minimax},
minimizes $R_K$.


\section{\CAST: Canonical Approximate Schur Trees}
\label{sec:cast}

In this section we define the local \CAST-$\rho$ update and the approximate-Cholesky pre\-conditioner obtained by applying it at each
pivot. The construction replaces the Schur clique of Eq.~\eqref{eqn:pivot_clique} with a sparse random update derived from a weighted spanning tree on a temporarily expanded neighborhood.

\subsection{The \CAST{} update}
\label{sec:cast-update}

Consider an elimination step with pivot $v$, active neighbors
$u_1,\ldots,u_d$, and incident edge weights $a_i=w_{vu_i}>0$, where
$a=\sum_{i=1}^{d}a_i$. We interpret these edge weights as electrical
conductances, and refer to the pivot neighbors as \emph{terminals}.
Exact elimination of $v$ adds the Schur clique $K_v$ of
Eq.~\eqref{eqn:pivot_clique}, which contains $\Theta(d^2)$ edges.

For an integer splitting factor $\rho\geq 1$, \CAST-$\rho$ temporarily
replaces each terminal $u_i$ by $P_i=\{(i,1),\ldots,(i,\rho)\}$, a block of $\rho$ \emph{auxiliary
copies}, each carrying conductance
$c_p=a_i/\rho$. Let $P=\bigcup_{i=1}^{d}P_i$, and let $\phi(p)=i$
denote the terminal associated with $p\in P_i$, so that
$\sum_{p\in P_i}c_p=a_i$ and $\sum_{p\in P}c_p=a$. On $P$,
\CAST-$\rho$ defines the complete weighted \emph{expanded Schur clique}
with edge conductances $w^{\mathrm{exp}}_{pq}=c_pc_q/a$ for $p\neq q$,
and samples a weighted random spanning tree $T$ according to
\begin{equation}
\label{eqn:expanded-wrst}
    \mathbb{P}(T)\propto \prod_{(p,q)\in T}w_{pq}^{\mathrm{exp}}.
\end{equation}
Sampling from such a distribution is not straightforward for general
edge weights; Section~\ref{sec:prufer} shows that the product form of
$w^{\mathrm{exp}}_{pq}$ admits exact sampling in $\bigO(\rho d)$ time.
The expanded clique is therefore used only to define the distribution
and is never materialized explicitly.

Each sampled auxiliary edge $(p,q)\in T$ is assigned conductance
$c_pc_q/(c_p+c_q)$, which is precisely its inverse-marginal reweighting
on the expanded clique (see Lemma~\ref{lem:aux-marginals}). The
auxiliary-copy blocks are then contracted back to their terminals. If
$\phi(p)=i\neq j=\phi(q)$, the edge $(p,q)$ contributes this
conductance to the terminal edge $(u_i,u_j)$; if $\phi(p)=\phi(q)$, it
becomes a self-loop after contraction and contributes nothing to the
terminal Laplacian. Writing $\widehat K_v$ for the random Laplacian on
the pivot neighborhood obtained after contracting the blocks and
discarding self-loops,
\begin{equation}
\label{eqn:cast-update}
\widehat K_v = \sum_{\substack{(p,q)\in T\\ \phi(p)\neq\phi(q)}}
\frac{c_pc_q}{c_p+c_q}
\bigl(\mathbf e_{\phi(p)}-\mathbf e_{\phi(q)}\bigr)
\bigl(\mathbf e_{\phi(p)}-\mathbf e_{\phi(q)}\bigr)^\top,
\end{equation}
where $\mathbf e_i$ denotes the basis vector associated with terminal
$u_i$. The auxiliary tree contains exactly $\rho d-1$ edges, so after
discarding within-block edges and aggregating parallel contracted
edges, the update contains at most $\rho d-1$ nonzero terminal-edge
contributions. For $\rho>1$ it need not itself be a tree on the
terminals, although its support is always connected (Theorem~\ref{thm:cast-connected}).

\paragraph{\CASTone: the base $\rho=1$ update.}
When $\rho=1$, each block holds a single copy carrying its terminal's
full conductance, so the expanded clique is the Schur clique itself and
contraction is the identity: \CAST-$\rho$ reduces to sampling one
weighted random spanning tree of $K_v$. Because Schur complementation
preserves effective resistances among the retained vertices, the
effective resistance between terminals $u_i$ and $u_j$ in $K_v$ equals
the series resistance of the path $u_i$--$v$--$u_j$ before elimination,
$R_{ij}=1/a_i+1/a_j$. The leverage score of edge $(i,j)$ is,
$
  \tau_{ij} = \frac{a_ia_j}{a}\left(\frac{1}{a_i}+\frac{1}{a_j}\right)
  = \frac{a_i+a_j}{a}.
$
By the transfer-current theorem the weighted random spanning tree selects edge $(i,j)$ with
probability $\tau_{ij}$, and inverse-marginal reweighting assigns it
conductance
$\frac{a_ia_j/a}{\tau_{ij}}=\frac{a_ia_j}{a_i+a_j}$,

exactly the series conductance of the two-hop path $u_i$--$v$--$u_j$
that the edge replaces. Thus \CASTone{} is an order-independent
one-tree replacement whose edge marginals equal the clique leverage
scores; Theorem~\ref{thm:minimax} shows that these marginals uniquely
minimize the local risk $R_{K_v}$.

\subsection{Exact sampling in $\bigO(\rho d)$ time via Pr\"ufer codes}
\label{sec:prufer}

Sampling the update requires a weighted random spanning tree of the
expanded Schur clique, a complete graph on $m=\rho d$ auxiliary copies.
Its conductances have the product form
$w^{\mathrm{exp}}_{pq}=c_pc_q/a$, and this structure permits exact
sampling from~\eqref{eqn:expanded-wrst} using an i.i.d.\ weighted
Pr\"ufer code, without materializing the dense clique. Background on
Pr\"ufer codes is provided in
Appendix~\ref{app:prufer-background}.

Fix a spanning tree $T$ on the auxiliary-copy set $P$. Since $T$ has $m-1$ edges,
\[
  \prod_{(p,q)\in T} w_{pq}^{\mathrm{exp}}
  = a^{-(m-1)}\prod_{p\in P}c_p^{\deg_T(p)},
\]
so, up to a factor independent of $T$, the target probability is
determined entirely by the vertex degrees. For $m\geq 2$ the Pr\"ufer
correspondence is a bijection between labeled trees on $P$ and
sequences in $P^{m-2}$, under which $p$ appears exactly
$\deg_T(p)-1$ times in the code of
$T$~\cite{aigner1999proofs,west2001introduction}. Drawing the $m-2$
symbols independently with $\Pr{\text{symbol}=p}=c_p/a$ therefore
generates $T$ with probability
\[
  \prod_{p\in P}\left(\frac{c_p}{a}\right)^{\deg_T(p)-1}
  = a^{-(m-2)}\prod_{p\in P}c_p^{\deg_T(p)-1}.
\]
The ratio of this probability to the unnormalized target weight is
$a\prod_{p\in P}c_p^{-1}$, independent of $T$. Hence the i.i.d.\
Pr\"ufer construction samples exactly from the weighted random
spanning-tree distribution of the expanded Schur clique.

The sampler needs neither the expanded clique nor an explicit array of
auxiliary-copy weights. All copies in $P_i$ carry conductance
$a_i/\rho$, so each symbol can be drawn hierarchically: sample terminal
$i$ with probability $a_i/a$, then one of its $\rho$ copies uniformly.
An alias table for the terminal distribution is built once in
$\bigO(d)$ time, after which each of the $m-2$ symbols costs
$\bigO(1)$. Decoding the code into a tree takes $\bigO(m)$ time by the
standard leaf-pointer algorithm. Contraction is a single pass over the
$m-1$ auxiliary edges: cross-block edges are placed on the
corresponding terminal edge with conductance $c_pc_q/(c_p+c_q)$,
within-block edges become self-loops and are discarded, and parallel
terminal edges are aggregated by summing conductances. Sampling,
decoding, and contraction therefore cost
$\bigO(d+m)=\bigO(\rho d)$ in total.

\begin{proposition}[Sampling cost]
\label{prop:cost}
For every pivot with $d\geq 2$ active neighbors and every integer
$\rho\geq 1$, \textsc{CAST-SchurTree}$(a_1,\ldots,a_d,\rho)$
samples the exact weighted random spanning-tree distribution of the
expanded Schur clique and contracts the sampled tree onto the original
pivot neighborhood in $\bigO(\rho d)$ time, without materializing
either the $\Theta(d^2)$-edge Schur clique or the expanded clique.
\end{proposition}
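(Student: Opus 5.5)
The proof has two parts. First we show that the output distribution is exactly the weighted spanning-tree law of Eq.~\eqref{eqn:expanded-wrst}. Then we account for the running time of each phase of \textsc{CAST-SchurTree}.

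\textbf{Correctness.} Set $m=\rho d\ge 2$; this holds because $d\ge 2$. Fix any labeled spanning tree $T$ on $P$. The edge conductances have product form, so the unnormalized target weight factors as $a^{-(m-1)}\prod_{p}c_p^{\deg_T(p)}$. By the Pr\"ufer bijection (Appendix~\ref{app:prufer-background}), $T$ corresponds to exactly one sequence in $P^{m-2}$, and in that sequence each $p$ appears $\deg_T(p)-1$ times. The symbols are drawn i.i.d.\ with law $c_p/a$, so $T$ is produced with probability $a^{-(m-2)}\prod_p c_p^{\deg_T(p)-1}$. The ratio of the target weight to this probability is the constant $a^{-1}\prod_p c_p$, which does not depend on $T$. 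Both expressions are distributions over the same finite set after normalization, so they coincide.

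The hierarchical draw also has the right law: we pick terminal $i$ with probability $a_i/a$ and then a copy uniformly, so each $p\in P_i$ is chosen with probability $(a_i/a)(1/\rho)=c_p/a$. Contraction is a deterministic function of $T$ and yields Eq.~\eqref{eqn:cast-update}. The degenerate case $m=2$, with an empty code and a single edge, is covered by the same formula.

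\textbf{Cost.} We count the work phase by phase.
\begin{itemize}
\item Computing $a=\sum_i a_i$ and building a Walker/Vose alias table over $(a_i/a)_{i}$ takes $\bigO(d)$ time.
\item Each of the $m-2$ symbols needs one alias lookup plus one uniform integer in $[\rho]$, which is $\bigO(1)$ per symbol. The copy $(i,k)$ is encoded implicitly as the integer $(i-1)\rho+k$, so no array of copy weights is ever stored.
\item Decoding uses the linear-time leaf-pointer algorithm. A counting pass computes the degrees $\deg_T(p)=1+\#\{\text{occurrences}\}$ in $\bigO(m)$. A pointer then scans for the smallest leaf, and the standard amortization shows it advances at most $m$ times in total, so the whole decode is $\bigO(m)$.
\item Contraction makes one pass over the $m-1$ edges. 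For each edge we compute $\phi$ by integer division and discard it if both endpoints lie in the same block. Otherwise we add $c_pc_q/(c_p+c_q)$ to the terminal pair $(\phi(p),\phi(q))$.
\item Parallel terminal edges are aggregated in $\bigO(m)$ expected time with a hash map keyed on unordered pairs. For a deterministic bound, we instead bucket by the smaller endpoint and then use a scratch array indexed by the other endpoint, reset after each bucket. Since at most $m-1$ pairs occur, this is $\bigO(d+m)$ deterministically.
\end{itemize}
The total is $\bigO(d+m)=\bigO(\rho d)$. At no point does the algorithm store more than $\bigO(m)$ items, so neither clique is materialized.

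\textbf{Main obstacle.} The delicate step is the linear-time decoding and aggregation, which must avoid the heap-based $\bigO(m\log m)$ Pr\"ufer decoder and any dense $d\times d$ accumulator. We handle it with the monotone leaf-pointer argument: a newly created leaf is used immediately only if its label is smaller than the current pointer. The aggregation side is handled by the bucketed scratch array described above.
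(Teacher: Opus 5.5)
Your proof is correct and follows the paper's own argument. Exactness comes from the same product-form factorization and Pr\"ufer degree-multiplicity count, which give a $T$-independent ratio; the $\bigO(\rho d)$ bound comes from the same alias-table, leaf-pointer decoding, and single-pass contraction accounting. Your explicit check that the hierarchical draw has law $c_p/a$ and your deterministic aggregation of parallel terminal edges go beyond the paper, whose Algorithm~\ref{alg:cast-schur-tree} simply returns a multiset; to stay within $\bigO(d+m)$, reset the scratch array only at the entries each bucket touched, not in full.
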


\begin{proof}
    See Appendix~\ref{app:prop_cost}.
\end{proof}

Summing Proposition~\ref{prop:cost} over pivots, the elimination takes 
$\bigO\!\left(\rho\sum_v d_v\right)$ time, where $d_v$ is the degree of
$v$ in the residual graph when it is eliminated. The residual degrees
depend on the fill realized by earlier updates, which is why we study
the effect of $\rho$ on downstream fill empirically
(Section~\ref{sec:empirical_methods}).

Two structural features of the Schur clique make this possible. For
$\rho=1$, its star origin gives the leverage scores in closed form
(Section~\ref{sec:cast-update}), identifying the target distribution
without any resistance computation. Under $\rho$-way splitting, the
expanded clique retains product-form conductances, so the same
Pr\"ufer argument applies verbatim. Splitting thus refines the update while preserving both the algebraic structure needed for exact
sampling and the order-independence that makes the rule canonical.

\paragraph{The \CAST{} pre\-conditioner.}
Applying the local \CAST-$\rho$ update at every pivot yields an
approximate Cholesky factorization, \textsc{CAST-Chol}
(Algorithms~\ref{alg:cast-schur-tree}--\ref{alg:CAST}). The method
maintains a \emph{residual graph}, the weighted graph on the
not-yet-eliminated vertices, initialized to $A$. At each step it
selects a pivot $v$, records the \emph{pivot star}---the pivot, its
active neighbors, and their incident conductances, which together form
the column of $\widehat{\mathcal{C}}$ associated with $v$---then
deletes $v$ from the residual graph and inserts the contracted
terminal-edge contributions returned by Algorithm \textsc{CAST-SchurTree}. In
the reuse setting, the resulting factor is built once for the fixed
operator $A$ and applied within PCG for the right-hand sides
$\mathbf{b}^{(1)},\ldots,\mathbf{b}^{(q)}$.

\begin{algorithm}[t]
\caption{\textsc{CAST-SchurTree}$(a_1,\ldots,a_d,\rho)$}
\label{alg:cast-schur-tree}
\begin{algorithmic}[1]
\If{$d\leq 1$}
  \State \Return an empty terminal-edge multiset
\EndIf
\State $a\gets\sum_{i=1}^{d}a_i$, \quad $m\gets\rho d$
\State Build an alias table for $\pi_i=a_i/a$, $i\in[d]$
  \Comment{$\bigO(d)$}
\State Initialize an empty Pr\"ufer sequence $S$
\For{$t=1,\ldots,m-2$}
  \State Sample $i\sim\pi$ and $k\sim\operatorname{Unif}([\rho])$
  \State Append the auxiliary copy $p=(i,k)$ to $S$
    \Comment{$\phi(p)=i$}
\EndFor
\State Decode $S$ into its spanning tree $T$ on the auxiliary-copy
  set \Comment{$\bigO(m)$}
\State Initialize an empty multiset $\mathcal{E}$
\For{each auxiliary-tree edge $(p,q)\in T$ with
  $i=\phi(p)\neq\phi(q)=j$}
  \State Insert $\bigl(i,\,j,\;a_ia_j/(\rho(a_i+a_j))\bigr)$
    into $\mathcal{E}$
\EndFor
\State \Return $\mathcal{E}$
\end{algorithmic}
\end{algorithm}

\begin{algorithm}[t]
\caption{\textsc{CAST-Chol}$(A,\rho)$}
\label{alg:CAST}
\begin{algorithmic}[1]
\State Initialize the residual graph to $A$ and the factor $F$ to
  empty
\While{active vertices remain}
  \State Select the next pivot $v$ by the elimination-ordering rule
  \State Read its active neighbors $u_1,\ldots,u_d$ and incident
    conductances $a_1,\ldots,a_d$
  \State Record the pivot star of $v$ in $F$
  \State Delete $v$ and its incident edges from the residual graph
  \State $\mathcal{E}_v\gets
    \textsc{CAST-SchurTree}(a_1,\ldots,a_d,\rho)$
  \For{each $(i,j,h)\in\mathcal{E}_v$}
    \State Add conductance $h$ to the residual edge $(u_i,u_j)$
      \Comment{parallel contributions accumulate}
  \EndFor
\EndWhile
\State \Return $F$
\end{algorithmic}
\end{algorithm}

\section{Local Theory of the \CAST{} Update}
\label{sec:local-theory}

We analyze the \CAST-$\rho$ update at a single elimination step. For
the base $\rho=1$ update, we prove that the weighted spanning-tree
distribution uniquely minimizes the local risk defined in
Eq.~\eqref{eq:risk} among unbiased inverse-marginal
one-tree estimators. For general integer $\rho\geq 1$, we establish
unbiasedness, connected support, and a normalized local second-moment
bound of order $1/\rho$, quantifying the reduction in local sampling
variability obtained through finer splitting.

Fix a pivot $v$ with active neighbors $u_1,\ldots,u_d$, where
$d\geq 2$, and let $a_i=w_{vu_i}>0$, $a=\sum_{i=1}^{d}a_i$. Exact
elimination of $v$ creates the Schur clique $K_v$ of
\eqref{eqn:pivot_clique}. We identify the pivot neighborhood with
$[d]$, and all Loewner-order comparisons between local Laplacians on the pivot neighborhood are
understood on
$\mathbf{1}_d^\perp=\{\mathbf x\in\mathbb R^d:
\mathbf x^\top\mathbf 1_d=0\}$.

\paragraph{\textbf{Auxiliary-copy notation}.}
We use the notation of Section~\ref{sec:cast-update}: the blocks
$P_i=\{(i,1),\ldots,(i,\rho)\}$ of auxiliary copies with conductances
$c_p=a_i/\rho$, the set $P=\bigcup_{i=1}^{d}P_i$, the map $\phi(p)=i$,
the expanded Schur clique with conductances
$w^{\mathrm{exp}}_{pq}=c_pc_q/a$, and the tree $T$ drawn
from~\eqref{eqn:expanded-wrst}. For an auxiliary edge $e=(p,q)$ we
write $H_e=H_{pq}$ for its contracted contribution to the terminal
Laplacian: setting $i=\phi(p)$ and $j=\phi(q)$,
\[
H_{pq}=\frac{c_pc_q}{c_p+c_q}
(\mathbf e_i-\mathbf e_j)(\mathbf e_i-\mathbf e_j)^\top
\quad\text{if } i\neq j,
\qquad H_{pq}=0 \quad\text{if } i=j,
\]
where $\mathbf e_i$ is the basis vector of terminal $u_i$, so that
$\widehat K_v=\sum_{(p,q)\in T}H_{pq}$ as in~\eqref{eqn:cast-update}.
Throughout this section $p_e=\Pr{e\in T}$ denotes the inclusion
probability of an edge $e$ of the \emph{expanded} clique; when
$\rho=1$ the expanded clique is $K_v$ itself, and $p_e$ agrees with
the clique-edge marginals of Section~\ref{sec:preliminaries}.

\paragraph{\textbf{Unbiasedness}.}
We first show that contracting the sampled auxiliary tree preserves
the exact Schur update in expectation.

\begin{lemma}[Auxiliary-edge marginals]
\label{lem:aux-marginals}
For every auxiliary edge $(p,q)$ of the expanded Schur clique,
$\Pr{(p,q)\in T}=\frac{c_p+c_q}{a}$.
\end{lemma}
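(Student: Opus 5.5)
We prove the marginal formula in two ways: by the transfer-current theorem with an explicit effective resistance, and independently by a direct computation from the weighted Pr\"ufer sampler.

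\emph{Transfer-current route.} The tree $T$ is a weighted random spanning tree of the expanded Schur clique, so by the transfer-current theorem~\cite{lyons2003determinantal}
\[
\Pr{(p,q)\in T}=w^{\mathrm{exp}}_{pq}\,R^{\mathrm{exp}}_{pq},
\]
where $R^{\mathrm{exp}}_{pq}$ is the effective resistance between $p$ and $q$ in the expanded clique. To compute it, note that the expanded clique with conductances $c_pc_q/a$ is exactly the Schur complement obtained by eliminating the center of a star. That star has a center joined to each copy $p\in P$ by conductance $c_p$, with total incident conductance $\sum_p c_p=a$. Schur complementation preserves effective resistances among the retained vertices, so $R^{\mathrm{exp}}_{pq}$ equals the series resistance through the star center, $1/c_p+1/c_q$. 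Multiplying gives
\[
\Pr{(p,q)\in T}=\frac{c_pc_q}{a}\left(\frac{1}{c_p}+\frac{1}{c_q}\right)=\frac{c_p+c_q}{a}.
\]
This repeats the $\rho=1$ leverage-score computation of Section~\ref{sec:cast-update}, with copies in place of terminals.

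\emph{Pr\"ufer route.} Section~\ref{sec:prufer} shows that $T$ is generated by an i.i.d.\ code of length $m-2$ with symbol probabilities $\pi_p=c_p/a$, where $m=\rho d$. Under this measure the tree has probability $\Pr{T}=a\prod_s c_s^{-1}\cdot\prod_{(s,t)\in T}c_sc_t/a$. Counting trees that contain a fixed edge $(p,q)$ is a generalized Cayley formula: contract $p,q$ into one vertex of weight $c_p+c_q$ and use
\[
\sum_T\prod_s x_s^{\deg_T(s)}=x_1\cdots x_m\,(x_1+\dots+x_m)^{m-2}.
\]
Both routes give $(c_p+c_q)/a$. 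We will write up the transfer-current route.

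\emph{Remaining checks.} The only step needing care is justifying that the expanded clique is the Schur complement of a star. This is the identity $K_v$ formula applied with the $c_p$ in place of the $a_i$, and it is immediate from~\eqref{eqn:pivot_clique}. We also note the edge case $m=2$ (that is, $\rho=d=1$), where the tree is the single edge and $(c_p+c_q)/a=1$, consistent with the formula. Finally, we record the corollary that inverse-marginal reweighting gives the edge conductance $(c_pc_q/a)\big/\bigl((c_p+c_q)/a\bigr)=c_pc_q/(c_p+c_q)$, as claimed in Section~\ref{sec:cast-update}.
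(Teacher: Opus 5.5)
Your transfer-current route is the paper's proof: realize the expanded clique as the Schur complement of the star whose center is joined to each copy $r$ with conductance $c_r$, read off $R^{\mathrm{exp}}_{pq}=1/c_p+1/c_q$ from the path through the center, and multiply by $w^{\mathrm{exp}}_{pq}=c_pc_q/a$. One small slip in your edge-case remark: under the standing assumption $d\ge 2$, $m=\rho d=2$ means $\rho=1$, $d=2$, not $\rho=d=1$ (which gives $m=1$); your conclusion that the tree is a single edge and $(c_p+c_q)/a=1$ is still right.
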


\begin{proof}
See Appendix~\ref{app:aux_marginals}.
\end{proof}

In particular, the conductance assigned to a selected auxiliary edge
is exactly its inverse-marginal reweighting:
\[
\frac{w^{\mathrm{exp}}_{pq}}{\Pr{(p,q)\in T}}
=\frac{c_pc_q/a}{(c_p+c_q)/a}
=\frac{c_pc_q}{c_p+c_q}.
\]

\begin{theorem}[Unbiasedness]
\label{thm:cast-unbiased}
For every pivot $v$ and every integer $\rho\geq 1$,
$\mathbb{E}[\widehat K_v]=K_v$.
\end{theorem}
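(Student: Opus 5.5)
The plan is to write $\widehat K_v=\sum_{(p,q)}\mathbf 1[(p,q)\in T]\,H_{pq}$, where the sum runs over all unordered pairs of distinct auxiliary copies. We then take expectations term by term using Lemma~\ref{lem:aux-marginals}, and finally regroup the expanded-clique edges by the terminal pair they contract to.

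\textbf{Step 1 (linearity).} By linearity of expectation,
\[
\mathbb E[\widehat K_v]=\sum_{p\neq q}\Pr{(p,q)\in T}\,H_{pq}.
\]
By Lemma~\ref{lem:aux-marginals}, $\Pr{(p,q)\in T}=(c_p+c_q)/a$. Substituting this into $H_{pq}$ gives, for $\phi(p)=i\neq j=\phi(q)$,
\[
\Pr{(p,q)\in T}\,H_{pq}=\frac{c_pc_q}{a}(\mathbf e_i-\mathbf e_j)(\mathbf e_i-\mathbf e_j)^\top
= w^{\mathrm{exp}}_{pq}\,(\mathbf e_i-\mathbf e_j)(\mathbf e_i-\mathbf e_j)^\top .
\]
Within-block pairs contribute $0$, since $H_{pq}=0$ when $\phi(p)=\phi(q)$.

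\textbf{Step 2 (regrouping).} Fix terminals $i<j$. The pairs $(p,q)$ with $p\in P_i$ and $q\in P_j$ number $\rho^2$, and each has $c_pc_q=a_ia_j/\rho^2$. Their total weight is therefore $\rho^2\cdot a_ia_j/(\rho^2 a)=a_ia_j/a$, which is exactly the weight of edge $(i,j)$ in $K_v$ from Eq.~\eqref{eqn:pivot_clique}. Summing over all $i<j$ yields $\mathbb E[\widehat K_v]=K_v$.

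\textbf{Main obstacle.} All the substance lies in Lemma~\ref{lem:aux-marginals}, which we are allowed to assume. If it had to be proved here, I would use the transfer-current theorem. The leverage score of $(p,q)$ in the product-form clique equals $w^{\mathrm{exp}}_{pq}R_{pq}$. Viewing the expanded clique as the Schur complement of a star with conductances $c_p$ gives $R_{pq}=1/c_p+1/c_q$, so the leverage score is $(c_p+c_q)/a$. The remaining steps are straightforward bookkeeping, with care needed only to handle self-loops and parallel edges correctly.
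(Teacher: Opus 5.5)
Your proposal is correct and follows the paper's proof essentially step for step: you apply linearity over the tree-edge indicators and drop the within-block edges. You then use Lemma~\ref{lem:aux-marginals} to turn each cross-block term into $c_pc_q/a$, and summing over $P_i\times P_j$ gives $a_ia_j/a$. The only cosmetic difference is that you count $\rho^2$ equal terms $a_ia_j/(\rho^2 a)$, whereas the paper factors the double sum as $\frac{1}{a}\bigl(\sum_{p\in P_i}c_p\bigr)\bigl(\sum_{q\in P_j}c_q\bigr)$.
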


\begin{proof}
See Appendix~\ref{app:thm_unbiasedness}.
\end{proof}

\paragraph{\textbf{Connected support and correct local nullspace}.}
Unbiasedness controls the local update in expectation. \CAST-$\rho$
also has a deterministic structural property: every realization has
connected support on the pivot neighborhood.

\begin{theorem}[Connected local support]
\label{thm:cast-connected}
For every integer $\rho\geq 1$, the terminal support graph of
$\widehat K_v$ is connected on $\{u_1,\ldots,u_d\}$.
\end{theorem}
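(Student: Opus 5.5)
The plan is to treat the contraction $\phi$ as a graph homomorphism from the expanded clique on $P$ to the terminal set $[d]$, and to show that it carries the connected auxiliary tree $T$ onto a connected support. The argument is deterministic: it holds for every realization $T$ in the support of~\eqref{eqn:expanded-wrst}, and it does not use the sampling distribution except to know that $T$ is a spanning tree of the complete graph on $P$. By construction (Pr\"ufer decoding, Proposition~\ref{prop:cost}), every output is a spanning tree on all $m=\rho d\ge 2$ auxiliary copies.

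First, I will identify the support of $\widehat K_v$ with the graph $G_T$ on $[d]$. In $G_T$, $i$ and $j$ are adjacent iff some edge $(p,q)\in T$ has $\phi(p)=i\neq j=\phi(q)$. By~\eqref{eqn:cast-update}, each such edge contributes $H_{pq}$ with conductance $c_pc_q/(c_p+c_q)>0$, since all $a_i>0$. Parallel contributions are added, and a sum of positive conductances on the same terminal pair stays positive, so no cancellation can occur. Hence the positive-weight edge set of $\widehat K_v$ is exactly the edge set of $G_T$. Within-block edges become self-loops, which neither add nor remove terminal adjacencies.

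Second, I will show that $G_T$ is connected. Fix terminals $i\neq j$, pick copies $p\in P_i$ and $q\in P_j$, and take the unique path $p=x_0,x_1,\ldots,x_k=q$ in $T$. The image sequence $\phi(x_0),\ldots,\phi(x_k)$ goes from $i$ to $j$. Each consecutive pair is either equal (a within-block step, which we skip) or is an edge of $G_T$. Deleting the repetitions therefore gives a walk from $i$ to $j$ in $G_T$. Every terminal appears in the image, since each block $P_i$ is nonempty and $T$ spans $P$, so $G_T$ is connected on all of $\{u_1,\ldots,u_d\}$. Equivalently, the quotient of a connected graph under a surjective vertex map is connected.

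I expect no real obstacle. The only point needing care is the first step, namely that aggregation and self-loop removal cannot destroy an adjacency. This is settled by the strict positivity of all contracted conductances. Finally, I would note that for $\rho=1$, $\phi$ is the identity and $G_T=T$ is a spanning tree, consistent with the description of \CASTone.
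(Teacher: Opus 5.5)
Your proposal is correct and follows essentially the same route as the paper. The paper also maps the $p$--$q$ path in $T$ to a walk in the quotient graph, and uses the strict positivity of the contracted conductances to rule out cancellation, so that the quotient edges (self-loops removed) are exactly the support of $\widehat K_v$.
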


\begin{proof}
See Appendix~\ref{app:thm_cast_connected}.
\end{proof}

\begin{corollary}[Correct local nullspace and rank]
\label{cor:correct-rank}
For every integer $\rho\geq 1$,
$\ker(\widehat K_v)=\operatorname{span}\{\mathbf 1_d\}$ and
$\operatorname{rank}(\widehat K_v)=d-1$.
\end{corollary}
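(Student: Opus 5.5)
The plan is to derive the corollary from Theorem~\ref{thm:cast-connected}, using the standard fact that a weighted Laplacian with positive edge weights has nullspace $\operatorname{span}\{\mathbf 1\}$ exactly when its support graph is connected. First I would record the structure of $\widehat K_v$ from Eq.~\eqref{eqn:cast-update}: it is a nonnegative combination of terms $(\mathbf e_i-\mathbf e_j)(\mathbf e_i-\mathbf e_j)^\top$ with $i\neq j$. The coefficients are $c_pc_q/(c_p+c_q)>0$ because all $a_i>0$. After parallel edges are aggregated, $\widehat K_v$ is therefore a graph Laplacian on $[d]$, and each terminal edge in its support graph has strictly positive total weight. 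This positivity is what guarantees that no cancellation can remove an edge from the support.

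Next I would compute the quadratic form. For any $\mathbf x\in\mathbb R^d$,
\[
\mathbf x^\top\widehat K_v\mathbf x=\sum_{\{i,j\}} h_{ij}(x_i-x_j)^2,
\]
where $h_{ij}>0$ exactly on the support edges. Since $\widehat K_v\succeq 0$, we have $\widehat K_v\mathbf x=0$ if and only if $\mathbf x^\top\widehat K_v\mathbf x=0$. That holds if and only if $x_i=x_j$ for every support edge $(i,j)$. By Theorem~\ref{thm:cast-connected} the support graph is connected on $\{u_1,\ldots,u_d\}$, so propagating equality along paths gives $x_i=x_j$ for all $i,j$, that is, $\mathbf x\in\operatorname{span}\{\mathbf 1_d\}$. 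Conversely, $\widehat K_v\mathbf 1_d=0$ because every term annihilates $\mathbf 1_d$. Hence $\ker(\widehat K_v)=\operatorname{span}\{\mathbf 1_d\}$.

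Finally, rank--nullity gives $\operatorname{rank}(\widehat K_v)=d-1$. The statement holds for every realization, since Theorem~\ref{thm:cast-connected} is deterministic, and it covers all integer $\rho\geq 1$. The only step needing care is the positivity point in the first paragraph: self-loops contribute zero and are excluded, and aggregation only adds positive weights. There is no real obstacle beyond invoking the connectivity theorem.
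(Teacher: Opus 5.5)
Your proof is correct and follows the paper's route: both deduce the corollary from Theorem~\ref{thm:cast-connected}, using the fact that a positively weighted Laplacian's kernel is spanned by the indicator vectors of its support components, and then apply rank--nullity. The only difference is that you verify this standard fact directly from the quadratic form $\sum h_{ij}(x_i-x_j)^2$ with $h_{ij}>0$, whereas the paper simply cites it.
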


\begin{proof}
See Appendix~\ref{app:correct-rank}.
\end{proof}

Connectivity alone does not distinguish \CAST{} from the practical
\AC{} and \ACtwo{} updates, which are also connected; the distinction
lies in the distribution from which the local update is drawn. We turn to that distribution.

\subsection{Local minimax optimality of the base
\texorpdfstring{$\rho=1$}{rho=1} update}
\label{sec:minimax}

When $\rho=1$, \CAST{} samples a weighted random spanning tree
directly from the Schur clique, so the update is an inverse-marginal
one-tree estimator (Section~\ref{sec:preliminaries}). We now answer Question~\ref{que:1}: which edge marginals minimize the
local risk $R_K(\mathcal D)=\max_{e\in E_K}\tau_e/p_e$, the largest
normalized contribution of any sampled edge? We answer it for an
arbitrary connected local Laplacian $K$; the Schur clique $K_v$ is the
case relevant to \CAST{}.

\begin{theorem}[Local minimax optimality of leverage-score marginals]
\label{thm:minimax}
For every spanning-tree distribution $\mathcal{D}$ satisfying
$p_e>0$ for all $e\in E_K$, $R_K(\mathcal{D})\geq 1$, with equality if
and only if $p_e=\tau_e$ for every $e\in E_K$. In particular, the
weighted random spanning-tree distribution of $K$, whose edge
marginals satisfy $p_e=\tau_e$, is locally minimax-optimal among
unbiased inverse-marginal one-tree estimators.
\end{theorem}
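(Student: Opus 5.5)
The argument is a counting identity: the marginals of any spanning-tree distribution and the leverage scores have the same total, so the ratios $\tau_e/p_e$ cannot all lie below $1$.

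\textbf{Sum of the marginals.} Every spanning tree of the support graph $([d],E_K)$ has exactly $d-1$ edges. Taking expectations of the indicator sum $\sum_{e\in E_K}\mathbf 1[e\in T]=d-1$ therefore gives $\sum_{e\in E_K}p_e=d-1$ for every distribution $\mathcal D$ over spanning trees.

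\textbf{Sum of the leverage scores.} From Section~\ref{sec:preliminaries}, $\sum_{e}\tau_e=\operatorname{tr}(K^+K)=\operatorname{rank}(K)=d-1$. Hence $\sum_e p_e=\sum_e\tau_e$.

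\textbf{Lower bound.} Write $R=R_K(\mathcal D)=\max_e\tau_e/p_e$. Then $\tau_e\le R\,p_e$ for every $e\in E_K$. Summing over edges gives
\[
d-1=\sum_e\tau_e\le R\sum_e p_e=R(d-1).
\]
Since $d\ge 2$, we get $R\ge 1$.

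\textbf{Equality case.} Suppose $R=1$. Then $\tau_e\le p_e$ for all $e$, and the two sides have equal sums. So $\sum_e(p_e-\tau_e)=0$ with every summand nonnegative, which forces $p_e=\tau_e$ for every edge. Conversely, if $p_e=\tau_e$ for all $e$, then every ratio $\tau_e/p_e$ equals $1$, so $R=1$. This requires $\tau_e>0$, which holds because $w_e>0$ and $\mathbf b_e\perp\mathbf 1$ is not in $\ker K$, so $\mathbf b_e^\top K^+\mathbf b_e>0$.

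\textbf{Optimality of the weighted random spanning tree.} By the transfer-current theorem~\cite{lyons2003determinantal}, the weighted random spanning-tree distribution of $K$ has marginals $p_e=\tau_e>0$, so it attains $R_K=1$. It is therefore minimax-optimal. Since $R_K$ depends on $\mathcal D$ only through its marginals, any optimal distribution must have exactly these marginals; this is the sense in which the optimizer is unique.

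The only step needing care is the equality characterization, including the check that $\tau_e>0$ so the leverage-score marginals are admissible. Everything else is the double-counting identity.
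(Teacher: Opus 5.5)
Your proof is correct and is essentially the paper's argument: both rest on $\sum_e p_e = d-1 = \sum_e \tau_e$. The paper phrases the bound as ``a maximum dominates the $p$-weighted average of the ratios $\tau_e/p_e$,'' while you sum $\tau_e \le R\,p_e$ and, for equality, use that nonnegative differences $p_e-\tau_e$ with zero sum must vanish; this is the same idea, and your extra check that $\tau_e>0$ is harmless.
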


\begin{proof}
See Appendix~\ref{app:minimax}.
\end{proof}

The proof is a short averaging argument: every spanning tree has
$d-1$ edges, so $\sum_e p_e=d-1=\sum_e\tau_e$, and a maximum is at
least a weighted average.

Theorem~\ref{thm:minimax} identifies the optimal edge-marginal vector
uniquely, namely $p_e=\tau_e$; it does not assert uniqueness of the
full distribution over spanning trees. \CASTone{} realizes these
marginals with the weighted random spanning-tree distribution of the
clique, which for the Schur clique $K_v$ has the closed form derived
in Section~\ref{sec:cast-update}: edge $(i,j)$ is included with
probability $\tau_{ij}=(a_i+a_j)/a$ and reweighted to conductance
$a_ia_j/(a_i+a_j)$. Hence \CASTone{} attains $R_{K_v}=1$, the smallest
value achievable in this class.

\subsection{Local second-moment certificate for
\texorpdfstring{\CAST-$\rho$}{CAST-rho}}
\label{ssec:second-moment}

Theorem~\ref{thm:minimax} concerns estimators supported on a single
spanning tree of the terminal clique, and so applies directly to the
base $\rho=1$ update. For $\rho>1$, \CAST-$\rho$ samples one spanning
tree on the expanded Schur clique and contracts the auxiliary-copy
blocks back to the terminals; the resulting terminal update need not
itself be a tree, and is therefore not a competing tree distribution
on $K_v$.

We analyze the general $\rho$ construction through a different local
certificate: the normalized second moment of the one-pivot Schur
error. The bound below scales as $1/\rho$, quantifying how splitting
decomposes the local estimator into smaller normalized contributions.

Let $E^{\mathrm{exp}}$ denote the edge set of the expanded Schur
clique. For each auxiliary edge $e=(p,q)\in E^{\mathrm{exp}}$, let
$H_e$ be its contracted terminal contribution, with $H_e=0$ when
$\phi(p)=\phi(q)$, and set $A_e=K_v^{+/2}H_eK_v^{+/2}$, its size in
the energy geometry of $K_v$. Let
$\Pi_{K_v}=K_v^{+/2}K_vK_v^{+/2}$ be the orthogonal projection onto
$\operatorname{range}(K_v)=\mathbf 1_d^\perp$.

\begin{lemma}[Normalized auxiliary-edge contribution]
\label{lem:atom-bound}
For every auxiliary edge $e\in E^{\mathrm{exp}}$,
$0\preceq A_e\preceq\frac{1}{\rho}\Pi_{K_v}$. Moreover,
$A_e^2=\frac{1}{\rho}A_e$.
\end{lemma}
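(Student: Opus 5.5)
The case $\phi(p)=\phi(q)$ is trivial, since then $H_e=0$, so $A_e=0$ and both claims hold. So fix an auxiliary edge $e=(p,q)$ with $i=\phi(p)\neq\phi(q)=j$. Then $c_p=a_i/\rho$ and $c_q=a_j/\rho$, and
\[
H_e=\frac{c_pc_q}{c_p+c_q}\,\mathbf b_{ij}\mathbf b_{ij}^\top=\frac{1}{\rho}\,\frac{a_ia_j}{a_i+a_j}\,\mathbf b_{ij}\mathbf b_{ij}^\top,
\qquad \mathbf b_{ij}=\mathbf e_i-\mathbf e_j .
\]
Hence $A_e=\frac{1}{\rho}\frac{a_ia_j}{a_i+a_j}\,\mathbf y\mathbf y^\top$ with $\mathbf y=K_v^{+/2}\mathbf b_{ij}$. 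This is a rank-one positive semidefinite matrix, which already gives $A_e\succeq 0$.

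The key computation is $\lVert\mathbf y\rVert^2=\mathbf b_{ij}^\top K_v^+\mathbf b_{ij}$, the effective resistance between terminals $i$ and $j$ in $K_v$. As recorded in Section~\ref{sec:cast-update}, Schur complementation preserves effective resistances, so this equals the series resistance $R_{ij}=1/a_i+1/a_j=(a_i+a_j)/(a_ia_j)$. If one wants this self-contained, I would verify it directly. Solve $K_v\mathbf x=\mathbf b_{ij}$ using $K_v=\operatorname{diag}(\mathbf a)-\mathbf a\mathbf a^\top/a$, where $\mathbf a=(a_1,\ldots,a_d)$. The guess $\mathbf x=\mathbf e_i/a_i-\mathbf e_j/a_j$ works, because $\mathbf a^\top\mathbf x=0$ and $\operatorname{diag}(\mathbf a)\mathbf x=\mathbf b_{ij}$. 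This gives $\mathbf b_{ij}^\top\mathbf x=1/a_i+1/a_j$, and pseudoinverse ambiguity does not matter since $\mathbf b_{ij}\perp\mathbf 1$. Equivalently, the leverage score of the clique edge $(i,j)$ is $\tau_{ij}=\frac{a_ia_j}{a}R_{ij}$, which is $(a_i+a_j)/a$. This identity is the only nontrivial step, and it is already available in the paper.

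It follows that $\operatorname{tr}A_e=\frac{1}{\rho}\frac{a_ia_j}{a_i+a_j}\cdot\frac{a_i+a_j}{a_ia_j}=\frac1\rho$. Therefore $A_e=\frac1\rho\,\mathbf u\mathbf u^\top$ with $\mathbf u=\mathbf y/\lVert\mathbf y\rVert$ a unit vector. The identity $A_e^2=\frac1{\rho^2}\mathbf u\mathbf u^\top\mathbf u\mathbf u^\top=\frac1\rho A_e$ follows immediately.

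For the upper bound, note that $\mathbf y\in\operatorname{range}(K_v^{+/2})=\mathbf 1_d^\perp=\operatorname{range}(\Pi_{K_v})$. This uses the fact that $K_v$ is connected with kernel $\operatorname{span}\{\mathbf 1\}$. Hence $\mathbf u\mathbf u^\top\preceq\Pi_{K_v}$, since a rank-one projection onto a unit vector in the range of an orthogonal projection is dominated by that projection. Multiplying by $1/\rho$ gives $A_e\preceq\frac1\rho\Pi_{K_v}$.
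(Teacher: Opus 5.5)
Your proposal is correct and follows essentially the same route as the paper's proof. You dispose of within-block edges, write $A_e$ as a rank-one PSD matrix, and use the effective resistance $1/a_i+1/a_j$ to get its single nonzero eigenvalue $1/\rho$; writing $A_e=\frac{1}{\rho}\mathbf u\mathbf u^\top$ with $\mathbf u\in\operatorname{range}(K_v)$ then gives both $A_e\preceq\frac{1}{\rho}\Pi_{K_v}$ and $A_e^2=\frac{1}{\rho}A_e$. Your optional direct check of the resistance, solving $K_v\mathbf x=\mathbf b_{ij}$ with $K_v=\operatorname{diag}(\mathbf a)-\mathbf a\mathbf a^\top/a$, is valid and self-contained, whereas the paper appeals to the star/Schur-complement argument of Lemma~\ref{lem:aux-marginals}.
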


\begin{proof}
See Appendix~\ref{app:atom_bound}.
\end{proof}

Each auxiliary edge therefore contributes at most $1/\rho$ in the
energy geometry of $K_v$, independently of the pivot degree and of the
incident weights: splitting shrinks the largest possible single
contribution in direct proportion to $\rho$. The second-moment
analysis combines this with the determinantal negative dependence of
weighted random spanning-tree edge indicators.

\begin{lemma}[Covariance domination for spanning-tree indicators]
\label{lem:dpp}
Let $T$ be a weighted random spanning tree of a connected weighted
graph, and for each edge $e$ let $X_e=\mathbf 1_{\{e\in T\}}$ and
$p_e=\mathbb{E}[X_e]$. If $C$ is the covariance matrix of $X$, then
$C\preceq\operatorname{diag}(p)$.
\end{lemma}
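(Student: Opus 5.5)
We will use the determinantal structure of the weighted random spanning tree. By the transfer-current theorem \cite{lyons2003determinantal}, the edge indicators of $T$ form a determinantal point process on the edge set $E$ with kernel $Y=W^{1/2}BL^{+}B^\top W^{1/2}$. Here $B$ is the $|E|\times n$ signed incidence matrix, $W=\operatorname{diag}(w)$, and $L=B^\top WB$ is the graph Laplacian. Two facts about $Y$ are needed. First, $Y$ is symmetric and idempotent, so it is the orthogonal projection onto $\operatorname{range}(W^{1/2}B)$; idempotence follows from $Y^2=W^{1/2}BL^+(B^\top WB)L^+B^\top W^{1/2}=W^{1/2}BL^+B^\top W^{1/2}$. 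Second, its diagonal gives the marginals, $p_e=Y_{ee}=\tau_e$. The pairwise joint inclusion probabilities are $\mathbb P(e,f\in T)=Y_{ee}Y_{ff}-Y_{ef}^2$.

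The first step is to read off the covariance entrywise from these formulas:
\[
C_{ef}=\mathbb P(e,f\in T)-p_ep_f=-Y_{ef}^2 \quad (e\neq f),
\]
\[
C_{ee}=p_e-p_e^2=Y_{ee}-Y_{ee}^2 .
\]
In both cases $C_{ef}=\delta_{ef}Y_{ee}-Y_{ef}^2$. Hence
\[
C=\operatorname{diag}(p)-Y\circ Y,
\]
where $\circ$ denotes the Hadamard (entrywise) product.

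The second step shows $Y\circ Y\succeq 0$. Since $Y$ is an orthogonal projection, $Y\succeq 0$, and the Schur product theorem then gives $Y\circ Y\succeq 0$. Concretely, for any vector $x$, $x^\top (Y\circ Y)x=\operatorname{tr}(D_xYD_xY)$ with $D_x=\operatorname{diag}(x)$. Writing $Y=Y^{1/2}Y^{1/2}$, this equals $\|Y^{1/2}D_xY^{1/2}\|_F^2\ge 0$. Therefore $C=\operatorname{diag}(p)-Y\circ Y\preceq\operatorname{diag}(p)$, which is the claim.

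The only step that is not routine is justifying the pairwise formula $\mathbb P(e,f\in T)=\det Y_{\{e,f\}}$. This is exactly the transfer-current theorem (Burton--Pemantle), which the paper already cites for the marginals. I will invoke it in its full determinantal form rather than reprove it, fixing an orientation of each edge so that the signs of $Y_{ef}$ are consistent. Only $Y_{ef}^2$ enters, so the choice of orientation is irrelevant. Note that the argument uses nothing about $Y$ beyond $Y\succeq0$, so it applies to any connected weighted graph, in particular the expanded Schur clique.
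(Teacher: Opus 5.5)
Your proposal is correct and follows the paper's proof step for step: both invoke the determinantal pair formula from the transfer-current theorem to get $C=\operatorname{diag}(p)-M\circ M$ (your $Y$ is the paper's transfer-current matrix $M$), and both conclude with the Schur product theorem. Your explicit kernel $W^{1/2}BL^{+}B^\top W^{1/2}$, the idempotence check, and the identity $x^\top(Y\circ Y)x=\|Y^{1/2}D_xY^{1/2}\|_F^2$ just spell out the positive-semidefiniteness facts that the paper cites rather than verifies.
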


\begin{proof}
See Appendix~\ref{app:dpp}.
\end{proof}

Negative dependence is what makes the tree structure work in our
favor: the sampled edges are not independent, but their covariance
never exceeds what independent sampling with the same marginals would
give, so the second moment can be bounded edgewise.

\begin{theorem}[Local second-moment bound for
\texorpdfstring{\CAST-$\rho$}{CAST-rho}]
\label{thm:local-second-moment}
Let $Y_v=K_v^{+/2}(\widehat K_v-K_v)K_v^{+/2}$ be the normalized local
Schur error of the \CAST-$\rho$ update. For every integer $\rho\geq1$,
\[
\mathbb{E}[Y_v]=0,
\qquad
\mathbb{E}[Y_v^2]\preceq\frac{1}{\rho}\,\Pi_{K_v}.
\]
\end{theorem}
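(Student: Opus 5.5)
The plan is to write the normalized error as a centered sum over auxiliary edges and bound its second moment edgewise, using Lemma~\ref{lem:atom-bound} for the size of each term and Lemma~\ref{lem:dpp} for the dependence between them. With $X_e=\mathbf 1_{\{e\in T\}}$ and $p_e=\mathbb E[X_e]$, we have $\widehat K_v=\sum_{e\in E^{\mathrm{exp}}}X_eH_e$. Theorem~\ref{thm:cast-unbiased} gives $K_v=\sum_e p_eH_e$. Hence
\[
Y_v=\sum_{e\in E^{\mathrm{exp}}}(X_e-p_e)A_e ,
\]
and $\mathbb E[Y_v]=0$ follows immediately.

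For the second moment, expand the square:
\[
\mathbb E[Y_v^2]=\sum_{e,f}C_{ef}\,A_eA_f ,
\qquad C_{ef}=\operatorname{Cov}(X_e,X_f).
\]
This matrix sum is a quadratic form in the covariance matrix $C$ with matrix-valued coefficients. To use the scalar Loewner bound $C\preceq\operatorname{diag}(p)$ of Lemma~\ref{lem:dpp}, test against an arbitrary vector $\mathbf x$. Set $\mathbf y_e=A_e\mathbf x$. Then
\[
\mathbf x^\top\mathbb E[Y_v^2]\mathbf x=\sum_{e,f}C_{ef}\,\langle\mathbf y_e,\mathbf y_f\rangle .
\]
Write each $\mathbf y_e$ in coordinates, $\mathbf y_e=\sum_k y_{e,k}\mathbf u_k$, with respect to an orthonormal basis $\{\mathbf u_k\}$. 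The right side then becomes $\sum_k \mathbf z_k^\top C\mathbf z_k$, where $\mathbf z_k=(y_{e,k})_e$. Applying Lemma~\ref{lem:dpp} to each $\mathbf z_k$ gives
\[
\mathbf x^\top\mathbb E[Y_v^2]\mathbf x\le\sum_k\sum_e p_e\,y_{e,k}^2=\sum_e p_e\|A_e\mathbf x\|^2 .
\]
Since $\mathbf x$ is arbitrary, this yields the matrix inequality
\[
\mathbb E[Y_v^2]\preceq\sum_e p_eA_e^2 .
\]

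Next, Lemma~\ref{lem:atom-bound} gives $A_e^2=\frac1\rho A_e$. Therefore
\[
\sum_e p_eA_e^2=\frac1\rho\sum_e p_eA_e=\frac1\rho K_v^{+/2}\Big(\sum_e p_eH_e\Big)K_v^{+/2}=\frac1\rho K_v^{+/2}K_vK_v^{+/2}=\frac1\rho\Pi_{K_v},
\]
where the third equality uses unbiasedness (Theorem~\ref{thm:cast-unbiased}) once more. This completes the argument.

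The main obstacle is the transfer of the scalar covariance bound of Lemma~\ref{lem:dpp} to the operator-valued sum $\sum C_{ef}A_eA_f$, which is not symmetric term by term. The coordinatewise reduction above handles this: the symmetrized quadratic form $\sum_{e,f}C_{ef}\langle\mathbf y_e,\mathbf y_f\rangle$ is a sum of scalar quadratic forms in $C$. One should also check that within-block edges, where $H_e=0$, are harmless: they contribute nothing to either side but are still part of $X$, and Lemma~\ref{lem:dpp} applies to the full expanded clique.
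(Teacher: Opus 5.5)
Your proposal is correct and follows essentially the same route as the paper: the same decomposition $Y_v=\sum_e(X_e-p_e)A_e$, the same bound $\mathbb E[Y_v^2]\preceq\sum_e p_eA_e^2$ via Lemma~\ref{lem:dpp}, and the same finish using $A_e^2=\frac1\rho A_e$ and unbiasedness. Your coordinatewise sum $\sum_k\mathbf z_k^\top C\mathbf z_k$ is just an explicit form of the paper's step: the paper shows $\langle\operatorname{diag}(p)-C,G\rangle_F\geq 0$ for the positive-semidefinite Gram matrix $G_{ef}=\langle\mathbf y_e,\mathbf y_f\rangle$, and indeed $G=\sum_k\mathbf z_k\mathbf z_k^\top$.
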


\begin{proof}
See Appendix~\ref{app:local-second-moment}.
\end{proof}

At $\rho=1$ the bound reads $\mathbb{E}[Y_v^2]\preceq\Pi_{K_v}$,
matching the unit local risk attained by \CASTone{}
(Theorem~\ref{thm:minimax}); each further doubling of $\rho$ halves
the certified bound.

\paragraph{Scope of the guarantees.} The results above certify the \CAST{} primitive at a single elimination step: for every $\rho$ the update is unbiased with connected support, at $\rho=1$ its marginals uniquely minimize the local risk, and for general $\rho$ its normalized second moment obeys a $1/\rho$ bound. The second moment is the quantity that matrix-concentration analyses of approximate elimination control alongside the largest single increment, so the bound identifies $\rho$ as a principled control on sampling variability, uniform over pivot degrees and incident weights. It remains a local certificate: it does not track how these errors accumulate across eliminations, and it says nothing about the magnitude of the resulting gain or about the construction cost and downstream fill that splitting introduces. Section~\ref{sec:empirical_methods} measures these effects across four benchmark families.


\section{Experimental Evaluation}
\label{sec:empirical_methods}


\paragraph{Methods.}
We compare four pre\-conditioners within a common elimination and PCG
framework. \AC~\cite{AC} is the practical approximate Cholesky
factorization based on the elimination estimator of Kyng and
Sachdeva~\cite{kyng2016approximate}, implemented in
\texttt{Laplacians.jl}\footnote{\url{https://github.com/danspielman/Laplacians.jl}};
at each pivot it replaces the exact Schur-complement clique by a
connected update generated through randomized sequential edge pairing.
\ACtwo{} is its doubled-budget variant (\texttt{split}~$=$~\texttt{merge}~$=2$):
each edge is represented by up to two half-weight multiedges, with fill
multiplicity capped at two. These are the state of the art on the
families our corpus is drawn from, where multigrid and
incomplete-Cholesky solvers~\cite{cmg,hypre,petsc,livne2012lean,icc}
each fail on some instance while \AC{} and \ACtwo{} converge
throughout~\cite{AC}, as our own runs on these collections confirmed; we therefore confine
the comparison to the randomized approximate-elimination family.

\CASTone{} applies the base $\rho=1$ update, replacing each elimination
clique by a weighted random spanning tree whose edge marginals equal
the clique leverage scores (Section~\ref{sec:cast-update}). \CASTtwo{}
applies two-way splitting: each pivot neighbor is temporarily replaced
by two equal-conductance auxiliary copies, one weighted random spanning
tree is sampled from the resulting expanded Schur clique, and the
copies are contracted back to the original neighborhood. We compare
\AC{} with \CASTone{} as the base-granularity methods and \ACtwo{} with
\CASTtwo{} as their doubled-granularity variants; all four share the
same elimination ordering (greedy on approximate
minimum unweighted degree, following~\cite{AC}), factor representation, and PCG
implementation, so the comparison isolates the local update.

\paragraph{\ACtwo{} versus \CASTtwo{}.}
The two robustness variants increase sampling granularity in
different ways. In \ACtwo{}, edge replication persists throughout the
factorization: the residual operator is represented as a multigraph
with at most two multiedges per vertex pair, and a current neighbor
represented by $t\in\{1,2\}$ multiedges contributes $t$ samples to the
sequential clique update. Thus \ACtwo{} is not equivalent to
averaging two independent \AC{} factorizations.

In \CASTtwo{}, splitting is local to the current pivot. Eliminating a
degree-$d$ pivot creates $2d$ temporary auxiliary copies, one weighted
random spanning tree is sampled on the expanded Schur clique, and its
copy blocks are contracted onto the $d$ original neighbors. The
auxiliary copies are then discarded, although the contracted terminal
edges remain in the residual graph and may increase downstream fill.
Hence \ACtwo{} and \CASTtwo{} are comparable doubled-granularity
variants, but they differ in sample dependence, persistence, and the
distribution of fill.

\paragraph{Metrics.}
For $q=250$ right-hand sides, we report the total reuse workload
\[
T_{250} = T_{\mathrm{build}} + \sum_{r=1}^{250}T_{\mathrm{solve},r}.
\]
The two primary paired speedups are
\[
S_1 = \frac{T_{250}(\AC)}{T_{250}(\CASTone)},
\qquad
S_2 = \frac{T_{250}(\ACtwo)}{T_{250}(\CASTtwo)},
\]
where values greater than one favor the corresponding \CAST{}
variant. Aggregate speedups are \emph{arithmetic} means of per-system
ratios, computed in the direction stated for each comparison; we also
report the number of systems on which each \CAST{} variant is faster,
which is independent of this convention.
Following~\cite{AC}, we also report factor-construction and
per-right-hand-side solve costs normalized by the number of input
non\-zeros,
\[
C_{\mathrm{build}} = 10^6\,
\frac{T_{\mathrm{build}}}{\operatorname{nnz}(A)},
\qquad
C_{\mathrm{solve}} = 10^6\,
\frac{\overline T_{\mathrm{solve}}}{\operatorname{nnz}(A)},
\]
with $\overline T_{\mathrm{solve}}$ the average per-solve time of
Section~\ref{sec:preliminaries}. Both normalized costs are reported in
$\mu\mathrm{s}/\mathrm{nnz}$.

\paragraph{Setup.}
Unless otherwise stated, every factorization is evaluated only as a
pre\-conditioner for the same PCG implementation, which uses a
recurrence-based relative-residual stopping threshold of $10^{-8}$ and
an iteration cap of $10^3$. After each solve, we explicitly compute
$\|A\mathbf x-\mathbf b\|_2/\|\mathbf b\|_2$ to verify the requested
tolerance. We report any discrepancy between the recurrence-based
stopping test and this explicit residual check; the treatment of
failed solves in each aggregate is stated alongside the corresponding
result.

For each matrix and method, one factor is constructed and reused for
$q=250$ right-hand sides. The right-hand sides are independent
standard Gaussian vectors projected onto $\mathbf 1^{\perp}$, as
required for Laplacian compatibility. A fixed right-hand-side
sequence, generated independently of the factor-sampling seeds, is
presented to every method and every factor draw, so comparisons are
paired at the level of individual right-hand sides. The number of
independently seeded factor draws and the aggregation rule are stated
for each collection in the corresponding appendix; all methods
entering a paired comparison use the same right-hand-side sequence and
the same number of factor draws.

\paragraph{Matrix collections.}
The evaluation corpus contains $201$ SDDM and Laplacian systems from
the SDDM2023 benchmark
suite\footnote{\url{https://rjkyng.github.io/SDDM2023/}} of Gao, Kyng,
and Spielman: $28$ Suite\-Sparse matrices, $128$ Chimera-IPM systems,
$40$ Spielman-IPM systems, and five Sachdeva-star instances. Eleven
Suite\-Sparse matrices are excluded from the comparative aggregates
as described below, leaving $190$ systems with paired measurements.
Collection-specific protocols and per-system results appear in the
appendix. All experiments run in a single process on an Apple Silicon MacBook
Pro with $24$\,GB of RAM under Julia~1.12. Pre\-conditioner
construction and application are single-threaded, and dense operations
in the shared PCG implementation use the same OpenBLAS configuration
for all methods.

The reported Spielman corpus contains the four scales
$k\in\{100,200,300,400\}$. The larger $k\in\{500,600\}$ sequences,
containing approximately $1.5\times10^8$ and $2.2\times10^8$
non\-zeros, exceed the memory capacity of the benchmark machine under
the multi-solve protocol and are not included.

Within the Suite\-Sparse collection, ten matrices are retained as
correctness checks but excluded from comparative aggregates because
all four methods solve them in one PCG iteration and their
timings are dominated by fixed construction and application overhead. We also
exclude \texttt{bcsstm25}: at the requested tolerance of $10^{-8}$,
all methods reach the iteration cap for at least some right-hand
sides, whereas at tolerance $10^{-5}$ every method converges in one
iteration, indicating that the failure is associated with the
matrix--tolerance pair rather than a particular pre\-conditioner. The
Suite\-Sparse comparative aggregate therefore contains the remaining
$17$ matrices.

\subsection{Cross-collection results}
\label{sec:cross-collection}

Table~\ref{tab:cross_collection} summarizes the paired comparative
subsets at $q=250$ right-hand sides per factor. Across the $190$
systems with paired measurements, selecting the faster of \CASTone{}
and \CASTtwo{} for each system in hindsight gives an arithmetic-mean
speedup of $1.9\times$ over the correspondingly faster of \AC{} and
\ACtwo{}, with \CAST{} faster on $178$ systems. Per-system results 
 are in Appendix ~\ref{app:additional_exp}.

\paragraph{Base granularity.}
\CASTone{} improves on \AC{} across every collection, and does so
uniformly rather than on average: it is faster on all $128$
Chimera-IPM systems, with per-system ratios between $1.08\times$ and
$1.31\times$ and a mean of $1.18\times$, and on $14$ of $17$
Suite\-Sparse matrices with a mean of $1.17\times$. 
Against a tuned implementation that is the most reliable available solver on these families, a consistent $1.17$--$1.18\times$ speedup
in total reuse workload---concentrated in the solve phase, and
therefore compounding with the number of right-hand sides---is a
substantive gain from changing only the local sampling distribution. On the Sachdeva-star ladder the margin instead grows with instance
size, from $1.58\times$ at $k=50$ to $5.30\times$ at $k=600$
($3.09\times$ on average), as \AC{}'s iteration count degrades faster
than \CASTone{}'s.

\paragraph{Doubled granularity.}
The $\rho=2$ comparison is bimodal, and the elimination-degree profile
predicts which regime holds. When pivot degrees are uniformly small,
splitting is nearly free and the gain is large: on the Spielman-IPM
systems, where every pivot has degree at most five, \CASTtwo{} reaches
the verified $10^{-8}$ tolerance in one to two PCG iterations and is
$4.43\times$ faster than \ACtwo{}. The Sachdeva-star family benefits
through a different mechanism. Its neighborhoods are clique-dominated
rather than low-degree and \CASTtwo{} needs multiple iterations for
convergence, but it holds iteration growth to $25$--$45$ across
the ladder against $28$--$65$ for \ACtwo{}. When
elimination produces a \emph{heavy tail} of high-degree pivots the additional
contributions instead dominate: on the Chimera collection \ACtwo{} is
faster on $101$ of $128$ systems, \CASTtwo{} is faster only on instance
$i_3$, whose maximum pivot degree of $48$ is the smallest in the
family. On Suite\-Sparse both are close, \CASTtwo{} leading by
$1.07\times$ on $11$ of $17$ matrices.

\begin{table*}[t]
\centering
\caption{Cross-collection performance at $q=250$ right-hand sides per
factor. Entries are per-collection aggregate speedups; the aggregation rule
is stated in the corresponding Appendix~\ref{app:additional_exp}. Parentheses give the number of systems on which the
\CAST variant is faster. The final
column is a post hoc  comparison that selects the faster
configuration within each method family separately for each system:
$\min(T_{\AC},T_{\ACtwo})/
 \min(T_{\CASTone},T_{\CASTtwo})$.}
\label{tab:cross_collection}
\begin{tabular}{l l c c c}
\toprule
Collection
& Elimination profile
& $\frac{\AC}{\CASTone}$
& $\frac{\ACtwo}{\CASTtwo}$
& $\frac{\min(\AC,\ACtwo)}
        {\min(\CASTone,\CASTtwo)}$ \\
\midrule
Suite\-Sparse 
& mostly low-degree, $\bar d\lesssim7$

& $1.17\times$ (14/17)
& $1.07\times$ (11/17)
& $1.11\times$ (12/17) \\

Chimera-IPM 
& $\bar d=6$--$27$, $d_{\max}=48$--$363$

& $1.18\times$ (128/128)
& $0.65\times$ (27/128)
& $1.16\times$ (121/128) \\

Spielman-IPM 
& all pivots satisfy $d\leq5$

& $1.27\times$ (38/40)
& $4.43\times$ (40/40)
& $4.43\times$ (40/40) \\

Sachdeva-star 
& clique-dominated

& $3.09\times$ (5/5)
& $1.90\times$ (5/5)
& $1.90\times$ (5/5) \\
\bottomrule
\end{tabular}
\end{table*}

\paragraph{\textbf{Choosing $\rho$}.} Two conclusions follow. First, the sampling distribution matters on its
own. All four methods share an elimination ordering, factor format, and
PCG path; \AC{} and \CASTone{} differ only in how the local update is
drawn. That difference alone cuts total reuse workload by $14$--$15\%$
on the heterogeneous collections, almost entirely in the solve phase.

Second, $\rho$ is a structural choice, not a strictly better setting.
Splitting pays when its extra terminal-edge contributions stay
confined: under small pivot degrees, as on Spielman-IPM, or in
clique-dominated neighborhoods where iterations would otherwise grow,
as on Sachdeva-star. It costs when a heavy degree tail carries them
into later eliminations, compounding as downstream fill. Degree alone
does not decide this; the clique-dominated case shows why. What matters
is whether the contributions propagate. A \CASTone{} elimination
profile predicts this on our corpus, but reading it requires a first
factorization, and we do not evaluate it as an adaptive rule. The
post hoc column of Table~\ref{tab:cross_collection} therefore bounds what
any per-system choice between $\rho=1$ and $\rho=2$ could achieve.

\section{Related Work}
\label{sec:related-work}

\paragraph{Laplacian and SDDM solvers.}
Nearly-linear-time Laplacian solvers originated from support-graph
preconditioning, low-stretch graph constructions, and spectral
sparsification~\cite{spielman2004nearly,koutis2011nearly,
cohen2014solving,jambulapati2021ultrasparse}, with
effective-resistance sampling an especially influential route to the
last~\cite{spielman2008graph}. These works give strong global
approximation and running-time guarantees, but their recursive
preconditioning structures differ from the sparse approximate
factorizations studied here. Practical solvers for these systems also
include incomplete Cholesky and algebraic multigrid, among them
graph-specialized variants such as \textsf{LAMG}~\cite{livne2012lean}.
\CAST{} belongs instead to the randomized approximate-elimination
line, which builds a sparse factor by eliminating vertices
sequentially and replacing each dense Schur-complement clique by a
sparse random update.

\paragraph{Randomized approximate elimination.}
Kyng and Sachdeva introduced a nearly-linear-time approximate Gaussian
elimination algorithm for graph Laplacians, replacing elimination fill
by unbiased random samples and controlling the accumulated error
through a matrix-martingale analysis~\cite{kyng2016approximate}.
Related sparsified-Cholesky constructions extend the approach to
connection Laplacians~\cite{kyng2016sparsified}. On the practical
side, \textsf{RCHOL} adds shared-memory
parallelization~\cite{chen2021rchol}, and more recent work studies
parallel frameworks and CPU/GPU construction of randomized
approximate-Cholesky
pre\-conditioners~\cite{baumann2024framework,liang2025parallel}. These
address global guarantees, alternative sampling schemes, or parallel
implementation. \CAST{} instead studies the distribution used for a
single local Schur-clique replacement, then applies that primitive
within a sequential factorization.

\paragraph{Practical approximate-Cholesky solvers.}
The closest predecessor is the AC($k$) framework of Gao, Kyng, and
Spielman~\cite{AC}, which turns randomized approximate elimination
into a practical SDDM solver in \texttt{Laplacians.jl}. Its local
updates preserve connected support, and its configurations AC and AC2
use one and two samples per entry to trade construction cost against
robustness. CAST adopts the same connected-update principle but
differs in the distribution. AC($k$) generates its trees by randomized
sequential edge pairing, so the induced edge marginals depend on the
order in which incident edges are processed. CAST samples from the
weighted random spanning-tree distribution of the clique itself, which
is order-independent and attains the minimax edge marginals of
Theorem~\ref{thm:minimax}.

\paragraph{Leverage scores and random spanning trees.}
Edge leverage scores---equivalently, edge conductance times effective
resistance---are central to spectral
sparsification~\cite{spielman2008graph}. For a weighted random
spanning tree, the transfer-current theorem identifies the inclusion
probability of each edge with its leverage
score~\cite{lyons2003determinantal}, and random spanning trees have
accordingly been studied as spectral sparsifiers, including from
unions of a few independent trees~\cite{kyng2018matrix}. Sampling such
trees on general graphs requires nontrivial graph-algorithmic
machinery~\cite{durfee2017sampling}.

\CAST{} uses this distribution locally rather than globally, which is
what makes it cheap. Sampling one tree per elimination clique avoids
the general machinery entirely: the star origin of the clique gives
its leverage scores in closed form, and the product-form conductances
of the expanded clique admit exact sampling in $\bigO(\rho d)$ time
via weighted Pr\"ufer codes~\cite{aigner1999proofs,west2001introduction}.
The union-of-trees results of~\cite{kyng2018matrix} are also not the
right analogy for $\rho>1$: \CAST-$\rho$ samples a single tree on an
expanded vertex set, not $\rho$ independent trees on the original terminals.

\paragraph{\textbf{Conclusion}}
\label{sec:conclusion}
We introduced \CAST{}, a canonical sparse replacement for the Schur
clique created during approximate Cholesky elimination. We show that among unbiased
inverse-marginal one-tree estimators, leverage-score marginals
uniquely minimize the largest normalized sampled-edge contribution.
Because the clique is induced by a star, \CAST{} samples the
corresponding weighted random spanning tree exactly in
$\bigO(\rho d)$ time. Empirically, \CASTone{} reduces total reuse
workload by $14$--$15\%$ relative to \AC{} on SuiteSparse and
Chimera-IPM systems. When additional fill remains controlled, \CASTtwo{} improves
robustness: on Spielman-IPM systems it reaches the verified $10^{-8}$
residual tolerance in one to two PCG iterations and is, on average,
$4.43\times$ faster than \ACtwo{}. 

\bibliographystyle{plain}    
\bibliography{references}

@String{Computing = "Computing" }

@String{Computer = "{IEEE} Computer" }

@String{Springer = "Springer-Verlag" }

@inproceedings{kyng2016approximate,
  title={Approximate gaussian elimination for laplacians-fast, sparse, and simple},
  author={Kyng, Rasmus and Sachdeva, Sushant},
  booktitle={2016 IEEE 57th Annual Symposium on Foundations of Computer Science (FOCS)},
  pages={573--582},
  year={2016},
  organization={IEEE}
}

@inproceedings{page1999pagerank,
  title={The pagerank citation ranking: Bring order to the web},
  author={Page, Lawrence and Brin, Sergey and Motwani, Rajeev and Winograd, Terry},
  booktitle={Proc. of the 7th International World Wide Web Conf.--1998},
  year={1999}
}

@inproceedings{zhu2003semi,
  title={Semi-supervised learning using gaussian fields and harmonic functions},
  author={Zhu, Xiaojin and Ghahramani, Zoubin and Lafferty, John D},
  booktitle={Proceedings of the 20th International conference on Machine learning (ICML-03)},
  pages={912--919},
  year={2003}
}

@article{huang2019isira,
  title={iSIRA: Integrated shift--invert residual Arnoldi method for graph Laplacian matrices from big data},
  author={Huang, Wei-Qiang and Lin, Wen-Wei and Lu, Henry Horng-Shing and Yau, Shing-Tung},
  journal={Journal of Computational and Applied Mathematics},
  volume={346},
  pages={518--531},
  year={2019},
  publisher={Elsevier}
}

@article{gleich2015pagerank,
  title={PageRank beyond the web},
  author={Gleich, David F},
  journal={siam REVIEW},
  volume={57},
  number={3},
  pages={321--363},
  year={2015},
  publisher={SIAM}
}

@phdthesis{gremban1996combinatorial,
  title={Combinatorial preconditioners for sparse, symmetric, diagonally dominant linear systems},
  author={Gremban, Keith D},
  year={1996},
  school={Carnegie Mellon University Pittsburgh}
}

@book{west2001introduction,
  author    = {West, Douglas B.},
  title     = {Introduction to Graph Theory},
  edition   = {2nd},
  publisher = {Prentice Hall},
  year      = {2001}
}

@book{aigner1999proofs,
  title={Proofs from the Book},
  author={Aigner, Martin and Ziegler, G{\"u}nter M},
  journal={Germany},
  volume={1},
  number={2},
  pages={7},
  year={1999},
  publisher={Springer}
}

@article{cmg,
  title={Combinatorial preconditioners and multilevel solvers for problems in computer vision and image processing},
  author={Koutis, Ioannis and Miller, Gary L and Tolliver, David},
  journal={Computer Vision and Image Understanding},
  volume={115},
  number={12},
  pages={1638--1646},
  year={2011},
  publisher={Elsevier}
}

@inproceedings{hypre,
  title={hypre: A library of high performance preconditioners},
  author={Falgout, Robert D and Yang, Ulrike Meier},
  booktitle={International Conference on computational science},
  pages={632--641},
  year={2002},
  organization={Springer}
}

@techreport{petsc,
  title       = {{PETSc} Users Manual},
  author      = {Balay, Satish and Abhyankar, Shrirang and Adams, Mark and
                 Brown, Jed and Brune, Peter and Buschelman, Kris and
                 Dalcin, Lisandro and Dener, Alp and Eijkhout, Victor and
                 Gropp, William and others},
  institution = {Argonne National Laboratory},
  year        = {2019}
}

@article{icc,
  title   = {An iterative solution method for linear systems of which the
             coefficient matrix is a symmetric {M}-matrix},
  author  = {Meijerink, J. A. and van der Vorst, Henk A.},
  journal = {Mathematics of Computation},
  volume  = {31},
  number  = {137},
  pages   = {148--162},
  year    = {1977}
}

@unpublished{baumann2026vac,
  title  = {{VAC}: A Volume-Sampling-Based Elimination Rule for Approximate Cholesky Factorization},
  author = {Baumann, Yves and Kyng, Rasmus and Z{\"o}cklein, Gernot},
  year   = {2026},
  month  = jul,
  note   = {Manuscript, July 28, 2026}
}

@article{zhou2003learning,
  title={Learning with local and global consistency},
  author={Zhou, Dengyong and Bousquet, Olivier and Lal, Thomas and Weston, Jason and Sch{\"o}lkopf, Bernhard},
  journal={Advances in neural information processing systems},
  volume={16},
  year={2003}
}

@inproceedings{koutis2011nearly,
  title={A nearly-m log n time solver for sdd linear systems},
  author={Koutis, Ioannis and Miller, Gary L and Peng, Richard},
  booktitle={2011 IEEE 52nd Annual Symposium on Foundations of Computer Science},
  pages={590--598},
  year={2011},
  organization={IEEE}
}

@inproceedings{cohen2014solving,
  title={Solving SDD linear systems in nearly m log1/2 n time},
  author={Cohen, Michael B and Kyng, Rasmus and Miller, Gary L and Pachocki, Jakub W and Peng, Richard and Rao, Anup B and Xu, Shen Chen},
  booktitle={Proceedings of the forty-sixth annual ACM symposium on Theory of computing},
  pages={343--352},
  year={2014}
}

@article{jambulapati2021ultrasparse,
  title={Ultrasparse ultrasparsifiers and faster laplacian system solvers},
  author={Jambulapati, Arun and Sidford, Aaron},
  journal={ACM Transactions on Algorithms},
  volume={21},
  number={3},
  pages={1--49},
  year={2025},
  publisher={ACM New York, NY}
}

@inproceedings{andersen2006local,
  title={Local graph partitioning using pagerank vectors},
  author={Andersen, Reid and Chung, Fan and Lang, Kevin},
  booktitle={2006 47th annual IEEE symposium on foundations of computer science (FOCS'06)},
  pages={475--486},
  year={2006},
  organization={IEEE}
}

@inproceedings{spielman2004nearly,
  title={Nearly-linear time algorithms for graph partitioning, graph sparsification, and solving linear systems},
  author={Spielman, Daniel A and Teng, Shang-Hua},
  booktitle={Proceedings of the thirty-sixth annual ACM symposium on Theory of computing},
  pages={81--90},
  year={2004}
}

@inproceedings{spielman2008graph,
  title={Graph sparsification by effective resistances},
  author={Spielman, Daniel A and Srivastava, Nikhil},
  booktitle={Proceedings of the fortieth annual ACM symposium on Theory of computing},
  pages={563--568},
  year={2008}
}

@article{livne2012lean,
  title={Lean algebraic multigrid (LAMG): Fast graph Laplacian linear solver},
  author={Livne, Oren E and Brandt, Achi},
  journal={SIAM Journal on Scientific Computing},
  volume={34},
  number={4},
  pages={B499--B522},
  year={2012},
  publisher={SIAM}
}

@inproceedings{kyng2016sparsified,
  title={Sparsified cholesky and multigrid solvers for connection laplacians},
  author={Kyng, Rasmus and Lee, Yin Tat and Peng, Richard and Sachdeva, Sushant and Spielman, Daniel A},
  booktitle={Proceedings of the forty-eighth annual ACM symposium on Theory of Computing},
  pages={842--850},
  year={2016}
}

@article{chen2021rchol,
  title={RCHOL: Randomized Cholesky factorization for solving SDD linear systems},
  author={Chen, Chao and Liang, Tianyu and Biros, George},
  journal={SIAM Journal on Scientific Computing},
  volume={43},
  number={6},
  pages={C411--C438},
  year={2021},
  publisher={SIAM}
}

@inproceedings{baumann2024framework,
  title={A Framework for Parallelizing Approximate Gaussian Elimination},
  author={Baumann, Yves and Kyng, Rasmus},
  booktitle={Proceedings of the 36th ACM Symposium on Parallelism in Algorithms and Architectures},
  pages={195--206},
  year={2024}
}

@article{liang2025parallel,
  title={Parallel GPU-Accelerated Randomized Construction of Approximate Cholesky Preconditioners},
  author={Liang, Tianyu and Chen, Chao and Yaniv, Yotam and Luo, Hengrui and Tench, David and Li, Xiaoye S and Buluc, Aydin and Demmel, James},
  journal={arXiv preprint arXiv:2505.02977},
  year={2025}
}

@article{AC,
  title={AC(k): Robust Solution of Laplacian Equations by Randomized Approximate Cholesky Factorization},
  author={Gao, Yuan and Kyng, Rasmus and Spielman, Daniel A},
  journal={SIAM Journal on Scientific Computing },
  year={2026}
}

@article{lyons2003determinantal,
  title={Determinantal probability measures},
  author={Lyons, Russell},
  journal={Publications Math{\'e}matiques de l'IH{\'E}S},
  volume={98},
  pages={167--212},
  year={2003}
}

@inproceedings{kyng2018matrix,
  title={A matrix chernoff bound for strongly rayleigh distributions and spectral sparsifiers from a few random spanning trees},
  author={Kyng, Rasmus and Song, Zhao},
  booktitle={2018 IEEE 59th Annual Symposium on Foundations of Computer Science (FOCS)},
  pages={373--384},
  year={2018},
  organization={IEEE}
}

@inproceedings{durfee2017sampling,
  title={Sampling random spanning trees faster than matrix multiplication},
  author={Durfee, David and Kyng, Rasmus and Peebles, John and Rao, Anup B and Sachdeva, Sushant},
  booktitle={Proceedings of the 49th Annual ACM SIGACT Symposium on Theory of Computing},
  pages={730--742},
  year={2017}
}

\appendix

\section{Background on Pr\"ufer Codes}
\label{app:prufer-background}

This appendix collects the facts about Pr\"ufer codes underlying the
exact sampler of Section~\ref{sec:prufer}.

\paragraph{The Pr\"ufer correspondence.}
Let $P$ be a set of $m\geq 2$ labeled vertices with an arbitrary total
ordering. A \emph{Pr\"ufer code} is a sequence
$S=(S_1,\ldots,S_{m-2})\in P^{m-2}$. The Pr\"ufer correspondence is a
bijection between such sequences and labeled spanning trees on
$P$~\cite{aigner1999proofs,west2001introduction}: every labeled tree
has a unique code, and every sequence of length $m-2$ decodes to a
unique tree. To encode a tree, repeatedly remove the smallest-labeled
leaf and append its unique neighbor to the sequence, stopping when two
vertices remain; one vertex is removed per step, so the code has
exactly $m-2$ symbols.

For example, the tree on $\{1,2,3,4,5\}$ with edges
$\{(1,3),(2,3) \\,(3,4),(4,5)\}$ has smallest leaf $1$, whose neighbor is
$3$, so the first symbol is $3$. After removing $1$ the smallest leaf
is $2$, again with neighbor $3$; after removing $2$ it is $3$, whose
remaining neighbor is $4$. The code is $(3,3,4)$.

The inverse procedure reconstructs the tree. For each label $p$
initialize $r_p=1+\lvert\{t:S_t=p\}\rvert$. At each step choose the
smallest label $\ell$ with $r_\ell=1$, connect $\ell$ to the next code
symbol $S_t$, remove $\ell$, and decrease $r_{S_t}$ by one. Two labels
remain once all symbols are processed; connecting them completes the
tree. When $m=2$ the code is empty and the tree is the single edge
between the two labels.

\paragraph{Degrees from code multiplicities.}
The property \CAST{} relies on is that $p$ appears in the code exactly
$\deg_T(p)-1$ times, i.e.
\[
\deg_T(p) = 1+\bigl\lvert\{t:S_t=p\}\bigr\rvert .
\]
Intuitively, $p$ is recorded whenever a neighboring leaf is removed
while $p$ remains, which happens once per incident edge except the
edge through which $p$ is itself removed, or the final edge if $p$ is
one of the last two vertices. Consistently,
$\sum_{p\in P}(\deg_T(p)-1)=2(m-1)-m=m-2$, the length of the code.

\paragraph{Nonuniform Pr\"ufer sampling.}
The correspondence is purely combinatorial. A weighted tree
distribution can nonetheless be induced by drawing the code symbols
nonuniformly: if $S_1,\ldots,S_{m-2}$ are independent with
$\Pr{S_t=p}=\pi_p$, then, because the correspondence is a bijection,
the decoded tree $T$ has probability
$\Pr{T}=\prod_{p\in P}\pi_p^{\deg_T(p)-1}$, which depends on $T$ only
through its degree sequence. This matches the structure of complete
graphs whose edge weights factor over endpoints.

\begin{lemma}[Weighted Pr\"ufer sampling]
\label{lem:prufer}
Let the complete graph on $P$ have edge weights
$w_{pq}=\beta\,\theta_p\theta_q$ for $p\neq q$, with $\beta>0$ and
$\theta_p>0$. Draw $S_1,\ldots,S_{m-2}$ independently with
$\Pr{S_t=p}=\theta_p/\sum_{r\in P}\theta_r$ and let $T$ be the decoded
tree. Then $T$ follows the weighted random spanning-tree distribution,
$\Pr{T}\propto\prod_{(p,q)\in T}w_{pq}$.
\end{lemma}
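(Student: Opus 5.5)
The plan is to compute the distribution of the decoded tree directly from the bijection, then compare it with the target weight tree by tree. Let $m=|P|$ and $\Theta=\sum_{r\in P}\theta_r$. For $m=2$ there is only one spanning tree, the single edge, and the empty code decodes to it, so the claim is trivial; assume $m\geq 3$.

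\textbf{Probability of a given tree.} Fix a labeled spanning tree $T$ on $P$. Because the Pr\"ufer correspondence is a bijection between labeled trees on $P$ and sequences in $P^{m-2}$, the event $\{$decoded tree $=T\}$ coincides with the event that the sampled sequence equals the unique code $S(T)$. The symbols are independent, so
\[
\Pr{T}=\prod_{t=1}^{m-2}\frac{\theta_{S(T)_t}}{\Theta}.
\]
Grouping the factors by label and using the multiplicity property $\lvert\{t:S(T)_t=p\}\rvert=\deg_T(p)-1$, this becomes $\Theta^{-(m-2)}\prod_{p\in P}\theta_p^{\deg_T(p)-1}$.

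\textbf{Target weight of the same tree.} I expand the weight edge by edge. Each of the $m-1$ edges contributes one factor of $\beta$, and each vertex $p$ contributes one factor of $\theta_p$ per incident edge. Hence
\[
\prod_{(p,q)\in T}w_{pq}=\beta^{m-1}\prod_{p\in P}\theta_p^{\deg_T(p)}.
\]

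\textbf{Ratio.} Dividing the sampling probability by the target weight gives
\[
\frac{\Pr{T}}{\prod_{(p,q)\in T}w_{pq}}=\Theta^{-(m-2)}\,\beta^{-(m-1)}\prod_{p\in P}\theta_p^{-1}.
\]
This constant does not depend on $T$. Since $\Pr{\cdot}$ is a probability distribution over all spanning trees of $K_P$, and every spanning tree has positive target weight, $\Pr{T}$ is exactly the normalized weighted spanning-tree distribution.

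The only nonroutine ingredient is the degree--multiplicity identity together with bijectivity. Both are the standard facts recalled above, so I will cite them rather than reprove them. No step should be a real obstacle; the point needing most care is the edge-to-vertex regrouping of the products, to get the exponents $\deg_T(p)$ and $\deg_T(p)-1$ right.
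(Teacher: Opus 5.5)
Your proposal is correct and follows the paper's proof essentially step for step. Both use the Pr\"ufer bijection and the degree--multiplicity identity to write $\Pr{T}=\Theta^{-(m-2)}\prod_{p}\theta_p^{\deg_T(p)-1}$, expand the tree weight as $\beta^{m-1}\prod_p\theta_p^{\deg_T(p)}$, and conclude from the $T$-independent ratio that the two distributions coincide. The only differences are cosmetic: you take the reciprocal ratio and treat $m=2$ separately, although the general computation already covers that case.
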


\begin{proof}
A spanning tree has $m-1$ edges and each $p$ appears in $\deg_T(p)$ of
them, so
$\prod_{(p,q)\in T}w_{pq}
 =\beta^{m-1}\prod_{p\in P}\theta_p^{\deg_T(p)}$.
By the degree-multiplicity property,
\[
\Pr{T}=\prod_{p\in P}
\left(\frac{\theta_p}{\sum_{r}\theta_r}\right)^{\deg_T(p)-1}
=\Bigl(\sum_{r}\theta_r\Bigr)^{-(m-2)}
\prod_{p\in P}\theta_p^{\deg_T(p)-1}.
\]
The ratio of the two expressions is
$\beta^{m-1}\bigl(\sum_r\theta_r\bigr)^{m-2}\prod_{p}\theta_p$, which
does not depend on $T$. Both are distributions over the same finite
set, so they coincide.
\end{proof}

The endpoint-product form is essential: independent Pr\"ufer symbols
do not sample the correct weighted spanning-tree distribution for
general edge weights.

\paragraph{Application to \CAST-$\rho$.}
At a pivot with $d$ terminals, the expanded Schur clique is the
complete graph on the $m=\rho d$ auxiliary copies with conductances
$w^{\mathrm{exp}}_{pq}=c_pc_q/a$. This is
Lemma~\ref{lem:prufer} with $\theta_p=c_p$ and $\beta=1/a$, and since
$\sum_{p\in P}c_p=a$, the symbol distribution is
$\Pr{S_t=p}=c_p/a$. Because every copy of terminal $i$ carries the
same conductance $a_i/\rho$, a symbol can be drawn hierarchically:
sample $i$ with probability $a_i/a$, then one of its $\rho$ copies
uniformly. Section~\ref{sec:prufer} gives the resulting sampler and
its $\bigO(\rho d)$ cost.

\section{Proofs}

\subsection{Proof of Proposition~\ref{prop:cost}}
\label{app:prop_cost}
\begin{proof}
Let $m=\rho d$. A spanning tree $T$ of the expanded Schur clique has
$m-1$ edges, and each copy $p$ appears in $\deg_T(p)$ of them, so
\begin{equation*}
\prod_{(p,q)\in T} w_{pq}^{\mathrm{exp}}
= a^{-(m-1)} \prod_{p\in P} c_p^{\deg_T(p)} .
\end{equation*}
The Pr\"ufer correspondence is a bijection between spanning trees of
the complete graph on $P$ and sequences in $P^{m-2}$, under which $p$
appears exactly $\deg_T(p)-1$ times in the code of $T$. Hence drawing
the $m-2$ symbols independently with $\Pr{\text{symbol}=p}=c_p/a$
generates $T$ with probability
\begin{equation*}
\prod_{p\in P} \left(\frac{c_p}{a}\right)^{\deg_T(p)-1}
= a^{-(m-2)} \prod_{p\in P}c_p^{\deg_T(p)-1},
\end{equation*}
where we used $\sum_{p\in P}\bigl(\deg_T(p)-1\bigr)=m-2$. The ratio of
this probability to the unnormalized tree weight above is
$a\prod_{p\in P}c_p^{-1}$, independent of $T$. Two probability
distributions on the same finite set whose ratio is constant are
equal, so the sampler is exact.

For the running time, all copies of terminal $i$ carry the same
conductance $a_i/\rho$, so a symbol is drawn by sampling a terminal
from an alias table for $(a_1/a,\ldots,a_d/a)$ and then one of its
$\rho$ copies uniformly. The table is built once in $\bigO(d)$ time,
after which each of the $m-2$ symbols costs $\bigO(1)$. Decoding takes
$\bigO(m)$ time by the standard leaf-pointer algorithm. Contraction is
one pass over the $m-1$ auxiliary edges with $\bigO(1)$ work each: a
cross-block edge $(p,q)$ is emitted as the terminal-edge contribution
$(\phi(p),\phi(q),c_pc_q/(c_p+c_q))$, and a within-block edge is
discarded. The total is $\bigO(d+m)=\bigO(\rho d)$.

Finally, the only structures allocated are the alias table over the
$d$ terminals, the code of length $m-2$, and the decoded tree on $m$
copies. Neither the $\Theta(d^2)$-edge Schur clique nor the expanded
clique is ever formed.
\end{proof}

\subsection{Proof of Lemma~\ref{lem:aux-marginals}}
\label{app:aux_marginals}
\begin{proof}
Let $\mathrm{S}$ be the star on $P\cup\{z\}$ in which the center $z$
is joined to each auxiliary copy $r\in P$ by an edge of conductance
$c_r$, so that the total conductance at $z$ is
$\sum_{r\in P}c_r=a$. Eliminating $z$ from $\mathrm{S}$ produces, by
Eq.~\eqref{eqn:pivot_clique}, the clique on $P$ with edge conductances
$c_pc_q/a=w^{\mathrm{exp}}_{pq}$. The expanded Schur clique is
therefore the Schur complement of $\mathrm{S}$ onto $P$.

Schur complementation preserves effective resistances among the
retained vertices, so the effective resistance between $p$ and $q$ in
the expanded clique equals their effective resistance in
$\mathrm{S}$. Since $\mathrm{S}$ is a tree, the unique $p$--$q$ path
$p$--$z$--$q$ determines this resistance, giving
\[
R_{pq}^{\mathrm{exp}} = \frac{1}{c_p}+\frac{1}{c_q}.
\]
By the transfer-current theorem~\cite{lyons2003determinantal}, the
inclusion probability of an edge in a weighted random spanning tree is
its conductance times its effective resistance, whence
\[
\Pr{(p,q)\in T}
= w_{pq}^{\mathrm{exp}}R_{pq}^{\mathrm{exp}}
= \frac{c_pc_q}{a}\left(\frac{1}{c_p}+\frac{1}{c_q}\right)
= \frac{c_p+c_q}{a}. \qedhere
\]
\end{proof}

\subsection{Proof of Theorem~\ref{thm:cast-unbiased}}
\label{app:thm_unbiasedness}
\begin{proof}
Write $\widehat K_v=\sum_{e\in E^{\mathrm{exp}}}X_e H_e$ with
$X_e=\mathbf 1_{\{e\in T\}}$, so that by linearity
$\mathbb E[\widehat K_v]=\sum_{e}\Pr{e\in T}\,H_e$. Within-block edges
contract to self-loops and contribute $H_e=0$, so only pairs with
endpoints in distinct blocks remain.

Fix distinct terminals $i\neq j$. Collecting the terms carrying
$(\mathbf e_i-\mathbf e_j)(\mathbf e_i-\mathbf e_j)^\top$ and applying
Lemma~\ref{lem:aux-marginals},
\[
\begin{aligned}
\sum_{p\in P_i}\sum_{q\in P_j}\Pr{(p,q)\in T}\,\frac{c_pc_q}{c_p+c_q}
&= \sum_{p\in P_i}\sum_{q\in P_j}
   \frac{c_p+c_q}{a}\cdot\frac{c_pc_q}{c_p+c_q} \\
&= \frac{1}{a}\Bigl(\sum_{p\in P_i}c_p\Bigr)
            \Bigl(\sum_{q\in P_j}c_q\Bigr)
 = \frac{a_ia_j}{a},
\end{aligned}
\]
using $\sum_{p\in P_i}c_p=a_i$. This is exactly the coefficient of the
terminal edge $(i,j)$ in $K_v$. Both $\mathbb E[\widehat K_v]$ and
$K_v$ are graph Laplacians on $\{u_1,\ldots,u_d\}$, and a graph
Laplacian is determined by its edge coefficients, so 
$$
\mathbb{E}[\widehat K_v]=K_v.
$$

\end{proof}

\subsection{Proof of Theorem~\ref{thm:cast-connected}}
\label{app:thm_cast_connected}
\begin{proof}
The sampled tree $T$ is connected on the auxiliary-copy set $P$.
Contract each block $P_i$ to its terminal $u_i$. Contraction preserves
connectedness: given terminals $u_i$ and $u_j$, pick $p\in P_i$ and
$q\in P_j$; the $p$--$q$ path in $T$ maps to a walk from $u_i$ to
$u_j$ in the quotient graph.

It remains to check that the quotient edges are exactly the edges of
the support graph of $\widehat K_v$. An auxiliary edge with both
endpoints in one block becomes a self-loop and is discarded, which
does not affect connectivity. A cross-block edge $(p,q)$ with
$\phi(p)=i\neq j=\phi(q)$ contributes conductance
$c_pc_q/(c_p+c_q)>0$ to the terminal edge $(u_i,u_j)$, and parallel
contributions are summed. Since all $c_p>0$, every such contribution
is strictly positive and no cancellation occurs, so each quotient edge
carries positive weight in $\widehat K_v$. The support graph of
$\widehat K_v$ is therefore the quotient graph with self-loops
removed, and is connected.

When $\rho=1$ each block is a single copy, so the contraction is the
identity and $\widehat K_v$ is itself a spanning tree of the terminal
neighborhood; for $\rho>1$ it need not be a tree.
\end{proof}

\subsection{Proof of Corollary~\ref{cor:correct-rank}}
\label{app:correct-rank}
\begin{proof}
The nullspace of a weighted graph Laplacian has dimension equal to the
number of connected components of its support graph, and is spanned by
the indicator vectors of those components. By
Theorem~\ref{thm:cast-connected} the support graph of $\widehat K_v$
is connected on $\{u_1,\ldots,u_d\}$, so
$\ker(\widehat K_v)=\operatorname{span}\{\mathbf 1_d\}$ and, by
rank--nullity, $\operatorname{rank}(\widehat K_v)=d-1$.
\end{proof}

\subsection{Proof of Theorem~\ref{thm:minimax}}
\label{app:minimax}
\begin{proof}
Every spanning tree of the support graph of $K$ has exactly $d-1$
edges, so $\sum_{e\in E_K}\mathbf 1_{\{e\in T\}}=d-1$ for every
realization; taking expectations gives $\sum_{e\in E_K}p_e=d-1$. The
leverage scores satisfy $\sum_{e\in E_K}\tau_e=\operatorname{rank}(K)=d-1$
as well, so
\[
1=\frac{\sum_{e\in E_K}\tau_e}{\sum_{e\in E_K}p_e}
 =\frac{\sum_{e\in E_K}p_e\,(\tau_e/p_e)}{\sum_{e\in E_K}p_e}
 \leq \max_{e\in E_K}\frac{\tau_e}{p_e}=R_K(\mathcal D),
\]
since the middle expression is a weighted average of the ratios
$\tau_e/p_e$ with positive weights $p_e$. Hence
$R_K(\mathcal D)\geq 1$.

A weighted average with positive weights attains its maximum only when
every term equals that maximum. Equality therefore forces
$\tau_e/p_e=1$, i.e.\ $p_e=\tau_e$, for every $e\in E_K$; conversely,
if $p_e=\tau_e$ throughout then $R_K(\mathcal D)=1$. Finally, for the
weighted random spanning-tree distribution of $K$ the transfer-current
theorem~\cite{lyons2003determinantal} gives
$p_e=\Pr{e\in T}=w_e\,\mathbf b_e^\top K^{+}\mathbf b_e=\tau_e$, so
this distribution attains the bound.
\end{proof}

\subsection{Proof of Lemma~\ref{lem:atom-bound}}
\label{app:atom_bound}
\begin{proof}
If $e$ is a within-block edge then $H_e=0$, hence $A_e=0$ and both
claims hold trivially. Otherwise let $e=(p,q)$ with
$\phi(p)=i\neq j=\phi(q)$, and write
$\mathbf b=\mathbf e_i-\mathbf e_j$, so that
$H_e=\frac{c_pc_q}{c_p+c_q}\,\mathbf b\mathbf b^\top$ and
$A_e=\frac{c_pc_q}{c_p+c_q}\,(K_v^{+/2}\mathbf b)(K_v^{+/2}\mathbf b)^\top$.
Thus $A_e$ is positive semidefinite and rank one, and its only nonzero
eigenvalue is
\[
\|A_e\|
=\frac{c_pc_q}{c_p+c_q}\,\bigl\|K_v^{+/2}\mathbf b\bigr\|^2
=\frac{c_pc_q}{c_p+c_q}\,\mathbf b^\top K_v^{+}\mathbf b .
\]
By the argument of Lemma~\ref{lem:aux-marginals} applied to the pivot
star, the effective resistance between $u_i$ and $u_j$ in $K_v$ is
$\mathbf b^\top K_v^{+}\mathbf b=1/a_i+1/a_j$. Substituting
$c_p=a_i/\rho$ and $c_q=a_j/\rho$,
\[
\|A_e\|
=\frac{a_ia_j}{\rho\,(a_i+a_j)}
 \left(\frac{1}{a_i}+\frac{1}{a_j}\right)
=\frac{1}{\rho},
\]
independently of $d$ and of the incident weights.

Write $A_e=\frac{1}{\rho}\mathbf v\mathbf v^\top$ with
$\mathbf v=K_v^{+/2}\mathbf b/\|K_v^{+/2}\mathbf b\|$. Since
$\mathbf v\in\operatorname{range}(K_v^{+/2})=\operatorname{range}(K_v)$
and $\Pi_{K_v}$ is the orthogonal projection onto that subspace,
$\mathbf v\mathbf v^\top\preceq\Pi_{K_v}$, giving
$0\preceq A_e\preceq\frac{1}{\rho}\Pi_{K_v}$. Finally, a rank-one
matrix $\lambda\mathbf v\mathbf v^\top$ with $\|\mathbf v\|=1$
satisfies $(\lambda\mathbf v\mathbf v^\top)^2
=\lambda^2\mathbf v\mathbf v^\top=\lambda\cdot
\lambda\mathbf v\mathbf v^\top$, so $A_e^2=\frac{1}{\rho}A_e$.
\end{proof}

\subsection{Proof of Lemma~\ref{lem:dpp}}
\label{app:dpp}
\begin{proof}
Fix an arbitrary orientation of the edges. The edge indicators of a
weighted random spanning tree then form a determinantal point process
whose kernel is the transfer-current matrix $M$, which is symmetric
and positive semidefinite, with
$M_{ee}=p_e$~\cite{lyons2003determinantal}. Determinantal pair
correlations give, for $e\neq f$,
\[
\Pr{e\in T,\,f\in T}
=\det\begin{pmatrix} M_{ee} & M_{ef}\\ M_{fe} & M_{ff}\end{pmatrix}
=p_ep_f-M_{ef}^2 ,
\]
so $C_{ef}=\Pr{e\in T,f\in T}-p_ep_f=-M_{ef}^2$. On the diagonal,
$X_e$ is a Bernoulli indicator, so
$C_{ee}=\mathbb E[X_e^2]-p_e^2=p_e-p_e^2=p_e-M_{ee}^2$. Hence
\[
C=\operatorname{diag}(p)-M\circ M ,
\]
where $\circ$ is the entrywise product; note that $C$ does not depend
on the chosen orientation, since only the squares $M_{ef}^2$ appear.
Since $M\succeq 0$, the Schur product theorem gives
$M\circ M\succeq 0$, and therefore
$C\preceq\operatorname{diag}(p)$.
\end{proof}

\subsection{Proof of Theorem~\ref{thm:local-second-moment}}
\label{app:local-second-moment}
\begin{proof}
The mean-zero statement follows from
Theorem~\ref{thm:cast-unbiased}:
\[
\mathbb{E}[Y_v]
= K_v^{+/2}\bigl(\mathbb{E}[\widehat K_v]-K_v\bigr)K_v^{+/2}=0 .
\]
For $e\in E^{\mathrm{exp}}$ let $X_e=\mathbf 1_{\{e\in T\}}$ and
$p_e=\mathbb{E}[X_e]$. Since
$\widehat K_v=\sum_{e}X_eH_e$ and, by unbiasedness,
$K_v=\sum_{e}p_eH_e$, conjugating by $K_v^{+/2}$ gives
\[
Y_v=\sum_{e\in E^{\mathrm{exp}}}(X_e-p_e)A_e .
\]

Fix $\mathbf z\in\mathbb{R}^d$. As $Y_v$ is symmetric,
$\mathbf z^\top\mathbb{E}[Y_v^2]\mathbf z
=\mathbb{E}\bigl[\lVert Y_v\mathbf z\rVert_2^2\bigr]$, and expanding
the square,
\[
\mathbb{E}\bigl[\lVert Y_v\mathbf z\rVert_2^2\bigr]
=\sum_{e,f}\mathbb{E}\bigl[(X_e-p_e)(X_f-p_f)\bigr]
 \langle A_e\mathbf z,A_f\mathbf z\rangle
=\sum_{e,f}C_{ef}\,G_{ef},
\]
where $G_{ef}=\langle A_e\mathbf z,A_f\mathbf z\rangle$ is the Gram
matrix of the vectors $\{A_e\mathbf z\}$ and is therefore positive
semidefinite. The right-hand side is the Frobenius inner product
$\langle C,G\rangle_F$.

By Lemma~\ref{lem:dpp}, $\operatorname{diag}(p)-C\succeq 0$. For
positive-semidefinite $A$ and $B$ one has
$\langle A,B\rangle_F=\operatorname{tr}(AB)
=\operatorname{tr}(A^{1/2}BA^{1/2})\geq 0$, so
$\langle\operatorname{diag}(p)-C,\,G\rangle_F\geq 0$ and hence
\[
\mathbf z^\top\mathbb{E}[Y_v^2]\mathbf z
\;\leq\;\langle\operatorname{diag}(p),G\rangle_F
=\sum_{e}p_e\lVert A_e\mathbf z\rVert_2^2
=\mathbf z^\top\Bigl(\sum_{e}p_eA_e^2\Bigr)\mathbf z,
\]
the last equality because each $A_e$ is symmetric. As $\mathbf z$ was
arbitrary, $\mathbb{E}[Y_v^2]\preceq\sum_{e}p_eA_e^2$.

Lemma~\ref{lem:atom-bound} gives $A_e^2=\frac{1}{\rho}A_e$, so
$\sum_{e}p_eA_e^2=\frac{1}{\rho}\sum_{e}p_eA_e$, and unbiasedness
gives
\[
\sum_{e\in E^{\mathrm{exp}}}p_eA_e
= K_v^{+/2}\Bigl(\sum_{e}p_eH_e\Bigr)K_v^{+/2}
= K_v^{+/2}K_vK_v^{+/2}
= \Pi_{K_v}.
\]
Therefore $\mathbb{E}[Y_v^2]\preceq\frac{1}{\rho}\Pi_{K_v}$.
\end{proof}


\section{Empirical Evaluation}
\label{app:additional_exp}

\subsection{SuiteSparse benchmark}\label{sec:suitesparse}

\paragraph{Benchmark.}
We evaluate on 28 symmetric diagonally dominant M-matrices from the
SuiteSparse Matrix Collection, following the benchmark selection of
Gao, Kyng, and Spielman~\cite{AC}. The collection includes grid and
mesh Laplacians, finite-element and finite-volume discretizations,
shallow-water models, ill-conditioned structural systems, and
irregular graph problems, with the largest matrices containing up to
$4.8$ million nonzeros.

Ten matrices are diagonal or effectively diagonal, and every method
converges in one PCG iteration for every right-hand side. These
instances do not exercise the randomized clique estimator; their
timings primarily reflect fixed construction and application
overhead. We therefore retain them as correctness checks but exclude
them from aggregate performance comparisons.

Of the remaining 18 matrices, \texttt{bcsstm25} is reported as did
not finish (DNF) and excluded from the aggregates. Its conditioning
places the requested tolerance of $10^{-8}$ below the attainable
double-precision accuracy for some right-hand sides, causing every
method to reach the iteration limit on those right-hand sides. At
tolerance $10^{-5}$, all methods converge in one iteration,
indicating that the failure is attributable to the matrix--tolerance
pair rather than to a particular solver. The aggregate comparison
therefore contains $17$ matrices.

The SDDM inputs are reduced to Laplacian form using the standard
Gremban expansion~\cite{gremban1996combinatorial}. This preprocessing
is applied once per matrix and shared identically by all methods. 

\paragraph{Results.}

Tables~\ref{tab:tier1} and~\ref{tab:tier2} report the complete
per-matrix results for the base- and doubled-granularity comparisons,
\AC versus \CASTone and \ACtwo versus \CASTtwo, respectively.
Matrices are ordered by total workload time. Table~\ref{tab:trivial}
reports the ten trivial instances excluded from the
comparative aggregates.

\paragraph{Base-granularity comparison.}
\CASTone is faster than \AC on 14 of the 17 matrices, with an average
 speedup of $1.17\times$. The largest gains occur on
\texttt{nos7} ($1.55\times$), \texttt{shallow\_water2}
($1.54\times$), and \texttt{nos6} ($1.48\times$). On the two largest
matrices, \texttt{ecology1} and \texttt{ecology2}, \CASTone is
$1.25\times$ and $1.26\times$ faster, respectively.

Build costs are similar for the two methods on most matrices, whereas
solve costs generally favor \CASTone. The aggregate improvement
therefore arises primarily from the solve phase rather than from
cheaper factor construction. The exceptions are \texttt{apache1}
($0.73\times$), \texttt{jnlbrng1} ($0.77\times$), and
\texttt{bcsstm24} ($0.85\times$). On \texttt{apache1} and
\texttt{jnlbrng1},  \CASTone requires more PCG
iterations; \CASTone requires 54 iterations per solve on
\texttt{apache1}, compared with 28 for \AC.

\paragraph{Doubled-granularity comparison.}
\CASTtwo is faster than \ACtwo on 11 of the 17 matrices, with an average speedup of $1.069\times$. It performs particularly
well on \texttt{shallow\_water2} ($1.42\times$), the two
\texttt{ecology} matrices ($1.27$--$1.31\times$), and
\texttt{apache1} ($1.22\times$). Thus, the $\rho=2$ construction
recovers the loss observed for \CASTone on \texttt{apache1},
consistent with improved robustness under finer splitting.

On \texttt{jnlbrng1}, \ACtwo requires 14 iterations per solve,
compared with 27 for \CASTtwo, yielding a speedup ratio of
$0.64\times$. The largest loss occurs on \texttt{Andrews}
($0.50\times$), whose highly skewed degree distribution leads to a
\CASTtwo build cost of $2.09\,\mu\mathrm{s}/\mathrm{nnz}$, compared
with $0.63\,\mu\mathrm{s}/\mathrm{nnz}$ for \ACtwo. This
construction overhead makes \CASTone, rather than \CASTtwo, the
preferable CAST configuration on this matrix.

\paragraph{Cost decomposition and factor reuse.}
The performance differences reflect both iteration count and
preconditioner-application cost. On \texttt{ecology1}, for example,
\CASTtwo requires 26.3 iterations per solve, compared with 29.5 for
\ACtwo, while also having a lower per-iteration application cost.
Together, these effects produce the observed $1.31\times$
total-time speedup.

Factor reuse is particularly important for the doubled-granularity
variants. In a separate solve-count sweep, at $q=1$, \ACtwo is
$51\%$ more expensive than \AC and is slower on
all 17 matrices, while each CAST variant remains within a few percent
of its corresponding baseline. The additional construction cost is
amortized as the number of right-hand sides increases; at $q=250$,
the aggregate comparison modestly favors \CASTtwo.

\begin{table*}[!t]

\centering

\caption{Base-granularity comparison (\AC versus \CASTone) on the
17 nontrivial SuiteSparse matrices. $T_{250}$ is the total time for
factor construction and 250 PCG solves to relative residual
$10^{-8}$. Each entry is reported in seconds as the median over five
independently seeded factor draws, and the speedup is the ratio of
these medians, 
$T_{250}(\AC)/T_{250}(\CASTone)$. Build and per-right-hand-side solve
costs are normalized by the number of input nonzeros.}

\label{tab:tier1}

\small

\begin{tabular}{l r rr r rr rr}

\toprule

& & \multicolumn{2}{c}{$T_{250}$ (s)} & &

\multicolumn{2}{c}{solve $\mu$s/nnz} &

\multicolumn{2}{c}{build $\mu$s/nnz} \\

\cmidrule(lr){3-4}\cmidrule(lr){6-7}\cmidrule(lr){8-9}

Matrix & nnz & \AC & \CASTone & speedup &

\AC & \CASTone & \AC & \CASTone \\

\midrule

ecology2         & 4{,}995{,}991 & 227.96 & 181.51 & 1.26$\times$ & 0.174 & 0.139 & 0.12 & 0.12 \\

ecology1         & 4{,}996{,}000 & 226.49 & 181.90 & 1.25$\times$ & 0.171 & 0.138 & 0.12 & 0.12 \\

apache1          &   542{,}184 &  14.26 &  19.59 & 0.73$\times$ & 0.097 & 0.137 & 0.09 & 0.09 \\

Andrews          &   760{,}154 &  13.84 &  11.29 & 1.23$\times$ & 0.070 & 0.059 & 0.19 & 0.23 \\

shallow\_water2  &   327{,}680 &   5.46 &   3.55 & 1.54$\times$ & 0.059 & 0.044 & 0.07 & 0.10 \\

torsion1         &   197{,}608 &   3.80 &   3.29 & 1.16$\times$ & 0.074 & 0.066 & 0.07 & 0.08 \\

obstclae         &   197{,}608 &   3.79 &   3.24 & 1.17$\times$ & 0.073 & 0.065 & 0.07 & 0.08 \\

shallow\_water1  &   327{,}680 &   3.46 &   2.43 & 1.42$\times$ & 0.039 & 0.029 & 0.06 & 0.10 \\

jnlbrng1         &   199{,}200 &   3.35 &   4.36 & 0.77$\times$ & 0.067 & 0.087 & 0.07 & 0.07 \\

nopoly           &    70{,}842 &   1.48 &   1.29 & 1.14$\times$ & 0.081 & 0.073 & 0.05 & 0.06 \\

fv3              &    87{,}025 &   1.26 &   1.23 & 1.03$\times$ & 0.057 & 0.056 & 0.05 & 0.06 \\

fv2              &    87{,}025 &   0.72 &   0.64 & 1.12$\times$ & 0.033 & 0.029 & 0.05 & 0.05 \\

fv1              &    85{,}264 &   0.71 &   0.62 & 1.14$\times$ & 0.033 & 0.029 & 0.05 & 0.05 \\

nos7             &     4{,}617 &   0.102 &  0.066 & 1.55$\times$ & 0.086 & 0.055 & 0.07 & 0.08 \\

gr\_30\_30       &     7{,}744 &   0.085 &  0.078 & 1.09$\times$ & 0.043 & 0.039 & 0.05 & 0.06 \\

bcsstm24         &     3{,}562 &   0.084 &  0.100 & 0.85$\times$ & 0.087 & 0.103 & 0.04 & 0.07 \\

nos6             &     3{,}255 &   0.076 &  0.051 & 1.48$\times$ & 0.092 & 0.062 & 0.06 & 0.07 \\

\midrule

\multicolumn{4}{l}{Arithmetic mean of speedups}

& \textbf{1.172$\times$} & \multicolumn{4}{l}{wins 14/17} \\

\multicolumn{9}{l}{\emph{Excluded (DNF):} \texttt{bcsstm25} --- its conditioning places the $10^{-8}$ target below attainable double-precision accuracy,} \\
\multicolumn{9}{l}{ so a subset of right-hand sides never converges for any method.} \\

\bottomrule

\end{tabular}

\end{table*}

\begin{table*}[!t]

\centering

\caption{Doubled-granularity comparison (\ACtwo versus \CASTtwo) on
the same 17 SuiteSparse matrices and under the same protocol as
Table~\ref{tab:tier1}. Speedup is the ratio of these medians defined as
$T_{250}(\ACtwo)/T_{250}(\CASTtwo)$.}

\label{tab:tier2}

\small

\begin{tabular}{l r rr r rr rr}

\toprule

& & \multicolumn{2}{c}{$T_{250}$ (s)} & &

\multicolumn{2}{c}{solve $\mu$s/nnz} &

\multicolumn{2}{c}{build $\mu$s/nnz} \\

\cmidrule(lr){3-4}\cmidrule(lr){6-7}\cmidrule(lr){8-9}

Matrix & nnz & \ACtwo & \CASTtwo & speedup &

\ACtwo & \CASTtwo & \ACtwo & \CASTtwo \\

\midrule

ecology1         & 4{,}996{,}000 & 164.76 & 125.98 & 1.31$\times$ & 0.124 & 0.095 & 0.25 & 0.28 \\

ecology2         & 4{,}995{,}991 & 163.20 & 128.32 & 1.27$\times$ & 0.124 & 0.097 & 0.27 & 0.28 \\

Andrews          &   760{,}154 &  14.77 &  29.58 & 0.50$\times$ & 0.075 & 0.144 & 0.63 & 2.09 \\

apache1          &   542{,}184 &  12.01 &   9.85 & 1.22$\times$ & 0.082 & 0.069 & 0.24 & 0.23 \\

shallow\_water2  &   327{,}680 &   4.71 &   3.31 & 1.42$\times$ & 0.051 & 0.040 & 0.14 & 0.12 \\

obstclae         &   197{,}608 &   3.24 &   2.92 & 1.11$\times$ & 0.065 & 0.058 & 0.18 & 0.15 \\

torsion1         &   197{,}608 &   3.23 &   2.92 & 1.11$\times$ & 0.065 & 0.058 & 0.18 & 0.15 \\

shallow\_water1  &   327{,}680 &   2.95 &   2.45 & 1.21$\times$ & 0.035 & 0.030 & 0.10 & 0.12 \\

jnlbrng1         &   199{,}200 &   2.70 &   4.21 & 0.64$\times$ & 0.053 & 0.083 & 0.19 & 0.14 \\

nopoly           &    70{,}842 &   1.06 &   1.03 & 1.02$\times$ & 0.059 & 0.058 & 0.10 & 0.11 \\

fv3              &    87{,}025 &   0.94 &   0.98 & 0.96$\times$ & 0.043 & 0.044 & 0.12 & 0.15 \\

fv2              &    87{,}025 &   0.61 &   0.63 & 0.97$\times$ & 0.027 & 0.029 & 0.11 & 0.11 \\

fv1              &    85{,}264 &   0.60 &   0.61 & 0.99$\times$ & 0.027 & 0.028 & 0.12 & 0.11 \\

bcsstm24         &     3{,}562 &   0.085 &  0.099 & 0.85$\times$ & 0.086 & 0.103 & 0.07 & 0.07 \\

nos7             &     4{,}617 &   0.077 &  0.061 & 1.26$\times$ & 0.065 & 0.051 & 0.14 & 0.16 \\

gr\_30\_30       &     7{,}744 &   0.068 &  0.063 & 1.09$\times$ & 0.034 & 0.032 & 0.12 & 0.13 \\

nos6             &     3{,}255 &   0.061 &  0.049 & 1.25$\times$ & 0.074 & 0.059 & 0.08 & 0.10 \\

\midrule

\multicolumn{4}{l}{Arithmetic mean of speedups}

& \textbf{1.069$\times$} & \multicolumn{4}{l}{wins 11/17} \\

\multicolumn{9}{l}{\emph{Excluded (DNF):} \texttt{bcsstm25} --- its conditioning places the $10^{-8}$ target below attainable double-precision accuracy,} \\
\multicolumn{9}{l}{ so a subset of right-hand sides never converges for any method.} \\
\bottomrule

\end{tabular}

\end{table*}

\begin{table}[t]
\centering
\caption{The ten trivial SuiteSparse matrices, retained as
correctness checks and excluded from aggregate comparisons. Every
method converges in exactly one PCG iteration on every right-hand
side. The reported totals, in seconds, include factor construction
and 250 solves to relative residual $10^{-8}$ and therefore primarily
measure fixed construction and application overhead rather than
preconditioner quality.}
\label{tab:trivial}
\small
\begin{tabular}{l r cccc}
\toprule
Matrix & nnz & \AC & \ACtwo & \CASTone & \CASTtwo \\
\midrule
\texttt{bcsstm39}  & 46{,}772 & 0.141 & 0.142 & 0.153 & 0.153 \\
\texttt{t3dl\_e}   & 20{,}360 & 0.058 & 0.059 & 0.063 & 0.063 \\
\texttt{t2dal\_e}  &  4{,}257 & 0.012 & 0.012 & 0.013 & 0.013 \\
\texttt{bcsstm21}  &  3{,}600 & 0.012 & 0.012 & 0.012 & 0.012 \\
\texttt{bibd\_81\_2} & 3{,}240 & 0.010 & 0.010 & 0.011 & 0.011 \\
\texttt{bcsstm23}  &  3{,}134 & 0.009 & 0.008 & 0.009 & 0.010 \\
\texttt{bcsstm26}  &  1{,}922 & 0.006 & 0.006 & 0.006 & 0.006 \\
\texttt{bcsstm11}  &  1{,}473 & 0.005 & 0.004 & 0.005 & 0.005 \\
\texttt{bcsstm08}  &  1{,}074 & 0.003 & 0.003 & 0.003 & 0.003 \\
\texttt{bcsstm09}  &  1{,}083 & 0.003 & 0.003 & 0.004 & 0.004 \\
\bottomrule
\end{tabular}
\end{table}


\subsection{IPM Sequences on Chimera Graphs}
\label{sec:chimera}

\paragraph{Collection.}
We evaluate on the maximum-flow interior-point-method sequences from
the SDDM2023 benchmark suite of Gao, Kyng, and
Spielman~\cite{AC}. Each system is a weighted graph Laplacian with
$n=100{,}000$ vertices and arises from a Newton step of an
interior-point method for undirected maximum flow. The underlying
graphs are five independent draws, denoted
$i_1,\ldots,i_5$, from the \emph{Chimera} generator in
\texttt{Laplacians.jl}~\footnote{https://github.com/danspielman/Laplacians.jl}. Chimera graphs combine
heterogeneous structures, including grid-like components, star joins,
and graph products, and are designed to stress Laplacian solvers.

The five graph instances differ  in their elimination
geometry. Under \CASTone elimination, instance $i_1$ has mean pivot
degree $\bar d=26.7$ and maximum pivot degree
$d_{\max}=363$, whereas $i_3$ has
$\bar d=6.4$ and $d_{\max}=48$
(Table~\ref{tab:chimera250}). For each instance, the benchmark
provides IPM runs at five duality-gap targets,
\[
\varepsilon\in\{10^{-1},\ldots,10^{-5}\},
\]
with each run contributing between three and six Newton-step systems. The resulting collection contains $23$--$28$ matrix systems per instance and $128$ systems in total. Relative performance is stable across the duality-gap targets. We therefore aggregate over Newton steps and targets within each underlying graph instance.

\paragraph{Configuration.}
We use the same protocol, metrics, and hardware described in
Section~\ref{sec:empirical_methods}. Every timed solve satisfies the
explicitly verified residual criterion
\[
\frac{\lVert A\mathbf{x}-\mathbf{b}\rVert_2}
     {\lVert\mathbf{b}\rVert_2}
\leq 10^{-8},
\]
and no method reaches the iteration cap. Within each instance we report the mean total workload over its
systems and take speedups as ratios of these means, which measures
the aggregate cost of processing an entire IPM sequence. Win counts
give the complementary per-system view (Table~\ref{tab:chimera250}).

\paragraph{Results.}
\CASTone is faster than \AC on all $128$ systems (Tables ~\ref{tab:chimera250} and ~\ref{tab:chimera_nnz}.). The per-instance speedups range from $1.13\times$ to $1.24\times$,
and the per-system ratios range from $1.08\times$ to $1.31\times$.
The normalized solve cost is lower for \CASTone on all five
instances. Build costs are equal or nearly equal on $i_1$ and $i_2$
and higher for \CASTone on $i_3$--$i_5$; at $q=250$, however, the
reduction in solve cost dominates these construction differences.

The behavior of \CASTtwo depends more strongly on the
elimination-degree profile. On $i_3$, which has the smallest mean and
maximum pivot degrees, \CASTtwo is the fastest method: it is
$1.16\times$ faster than \ACtwo and wins on all $27$ systems. On
the other four instances, the \ACtwo/\CASTtwo ratios are
$0.35\times$, $0.76\times$, $0.45\times$, and $0.52\times$,
respectively.

This difference is associated with construction fill. At a degree-$d$
pivot, \CASTtwo samples one spanning tree on $2d$ auxiliary copies.
The tree contains $2d-1$ auxiliary edges before within-block edges
are discarded and parallel terminal-edge contributions are
aggregated, compared with $d-1$ edges for \CASTone. On instances
with high-degree pivots, the additional terminal-edge contributions
propagate through subsequent eliminations. Accordingly, the median \CASTtwo build cost relative to \CASTone
increases from $3.0\times$ on $i_3$ and $4.6\times$ on $i_2$ to
approximately $10\times$ on $i_4$ and $i_5$, and $13.1\times$ on
$i_1$. The doubled-granularity baseline is also structure dependent:
\ACtwo has lower mean total cost than \AC on $i_2$ and $i_3$, but
higher cost on $i_1$, $i_4$, and $i_5$.

\paragraph{Choosing the splitting factor.}
At $q=250$, \CASTtwo outperforms \CASTone only on $i_3$. Its factor
is $3.0\times$ more expensive to construct on this instance, so
\CASTone remains $1.38\times$ faster at $q=4$. The two variants
cross between $q=10$ and $q=50$, and \CASTtwo is $1.17\times$
faster by $q=250$. On the four instances with
$d_{\max}\geq92$, \CASTtwo is $1.3$--$4.6\times$ slower than
\CASTone at $q=250$. Within this collection, the results support a degree-aware empirical
choice of $\rho$. A light upper tail in the pivot-degree distribution
favors $\rho=2$ when the factor is reused sufficiently to amortize
its higher construction cost. In contrast, $\rho=1$ is preferable
when high-degree pivots make downstream fill and factor construction
dominant. The pivot-degree profile observed during a \CASTone
construction may inform this choice, although we treat this as an
empirical heuristic rather than as an evaluated adaptive selection
algorithm.


\begin{table*}[t]
\centering
\caption{Chimera-IPM total workload at $q=250$ right-hand sides per factor.
$T_{250}$ includes factor construction and 250 PCG solves to verified
relative residual $10^{-8}$. For each system--method pair we report
the median over five independently seeded factor draws; displayed
times are arithmetic means of these medians over the $23$--$28$
systems associated with each underlying graph instance. Parentheses give the number
of systems on which the corresponding \CAST{} variant is faster. } 
\label{tab:chimera250}
\small
\begin{tabular}{l r r r rrrr rr}
\toprule
& & & & \multicolumn{4}{c}{$T_{250}$ (s)} &
\multicolumn{2}{c}{Speedup (mean)} \\
\cmidrule(lr){5-8}\cmidrule(lr){9-10}
Instance & nnz & $\bar d$ & $d_{\max}$ &
\AC & \ACtwo & \CASTone & \CASTtwo &
$\AC/\CASTone$ &
$\ACtwo/\CASTtwo$ \\
\midrule
$i_1$ & 1{,}100{,}592 & 26.7 & 363 & 30.6 & 40.1 & 24.6 & 113.8 & $1.24\times$ (28/28) & $0.35\times$ (0/28) \\
$i_2$ & 797{,}974 & 9.5 & 92 & 19.8 & 17.6 & 17.5 & 23.1 & $1.13\times$ (24/24) & $0.76\times$ (0/24) \\
$i_3$ & 499{,}696 & 6.4 & 48 & 17.9 & 15.5 & 15.7 & 13.4 & $1.14\times$ (27/27) & $1.16\times$ (27/27) \\
$i_4$ & 814{,}436 & 15.2 & 229 & 23.1 & 25.8 & 19.5 & 57.4 & $1.19\times$ (26/26) & $0.45\times$ (0/26) \\
$i_5$ & 499{,}982 & 10.7 & 246 & 13.9 & 15.6 & 11.9 & 30.1 & $1.17\times$ (23/23) & $0.52\times$ (0/23) \\
\midrule
\multicolumn{10}{l}{Across instances: \AC/\CASTone arithmetic mean $\mathbf{1.18\times}$; \CASTone is faster on $128/128$ systems.} \\
\bottomrule
\end{tabular}
\end{table*}
\begin{table*}[t]
\centering
\caption{Chimera-IPM costs normalized by the number of input
nonzeros. The relative
construction cost of \CASTtwo broadly follows the upper tail of the
elimination-degree distribution: its build cost is $3.0\times$ that
of \CASTone on $i_3$, approximately $10\times$ on $i_4$ and $i_5$,
and $13.1\times$ on $i_1$. }
\label{tab:chimera_nnz}
\small
\begin{tabular}{l rrrr rrrr}
\toprule
& \multicolumn{4}{c}{Solve ($\mu\mathrm{s}/\mathrm{nnz}$)} &
\multicolumn{4}{c}{Build ($\mu\mathrm{s}/\mathrm{nnz}$)} \\
\cmidrule(lr){2-5}\cmidrule(lr){6-9}
Instance & \AC & \ACtwo & \CASTone & \CASTtwo &
\AC & \ACtwo & \CASTone & \CASTtwo \\
\midrule
$i_1$ & 0.105 & 0.139 & 0.087 & 0.392 & 0.40 & 1.15 & 0.41 & 5.37 \\
$i_2$ & 0.093 & 0.083 & 0.083 & 0.110 & 0.16 & 0.37 & 0.16 & 0.73 \\
$i_3$ & 0.133 & 0.115 & 0.116 & 0.099 & 0.10 & 0.32 & 0.15 & 0.45 \\
$i_4$ & 0.109 & 0.119 & 0.091 & 0.269 & 0.25 & 0.68 & 0.28 & 2.85 \\
$i_5$ & 0.101 & 0.114 & 0.089 & 0.230 & 0.18 & 0.59 & 0.28 & 2.80 \\
\bottomrule
\end{tabular}
\end{table*}


\begin{table*}[t]
\centering
\caption{Spielman-IPM total cost at $q=250$ right-hand sides per
factor. $T_{250}$ includes factor construction and $250$ PCG solves to
verified relative residual $10^{-8}$. Each scale has $10$ systems. For each system--method pair we
report the median over five independently seeded factor draws;
displayed times are arithmetic means of these medians over the systems
in each instance, and speedups are ratios of the displayed means, so they
measure the aggregate cost of solving an entire IPM sequence.
Parentheses report the number of systems on which the \CAST{} variant
is faster. \AC{} and \ACtwo{} use the buffered application kernel.}
\label{tab:spielman_tot}
\small
\begin{tabular}{l r rrrr rr}
\toprule
& & \multicolumn{4}{c}{$T_{250}$ (s)} &
\multicolumn{2}{c}{Speedup} \\
\cmidrule(lr){3-6}\cmidrule(lr){7-8}
Scale & nnz &
\AC & \ACtwo & \CASTone & \CASTtwo &
$\AC/\CASTone$ & $\ACtwo/\CASTtwo$ \\
\midrule
$k=100$ & 1{,}025{,}404 & 7.35 & 6.84 & 8.23 & 1.45 &
$0.89\times$ (9/10) & $4.72\times$ (10/10) \\
$k=200$ & 8{,}080{,}604 & 59.68 & 54.65 & 48.19 & 12.30 &
$1.24\times$ (9/10) & $4.44\times$ (10/10) \\
$k=300$ & 27{,}226{,}204 & 204.60 & 182.78 & 132.75 & 40.42 &
$1.54\times$ (10/10) & $4.52\times$ (10/10) \\
$k=400$ & 64{,}321{,}604 & 409.21 & 403.72 & 290.87 & 99.81 &
$1.41\times$ (10/10) & $4.04\times$ (10/10) \\
\midrule
\multicolumn{8}{l}{Across scales: $\AC/\CASTone$ mean $1.27\times$;
$\ACtwo/\CASTtwo$ mean $\mathbf{4.43\times}$, with \CASTtwo{} faster
on $40/40$ systems.} \\ \\
\bottomrule
\end{tabular}
\end{table*}

\begin{table*}[!ht]
\centering
\caption{Spielman IPM ladder normalized by the number of input nonzeros, under the same protocol as Table~\ref{tab:spielman_tot}.}
\label{tab:spielman_nnz}
\small
\begin{tabular}{l rrrr rrrr}
\toprule
& \multicolumn{4}{c}{solve $\mu$s/nnz} &
\multicolumn{4}{c}{build $\mu$s/nnz} \\
\cmidrule(lr){2-5}\cmidrule(lr){6-9}
Scale & \AC & \ACtwo & \CASTone & \CASTtwo &
\AC & \ACtwo & \CASTone & \CASTtwo \\
\midrule
$k=100$   & 0.0248 & 0.0247 & 0.0204 & 0.0054 & 0.032 & 0.061 & 0.049 & 0.049 \\
$k=200$   & 0.0248 & 0.0247 & 0.0157 & 0.0055 & 0.043 & 0.072 & 0.070 & 0.069 \\
$k=300$ & 0.0247 & 0.0246 & 0.0161 & 0.0055 & 0.042 & 0.103 & 0.073 & 0.077 \\
$k=400$ & 0.0247 & 0.0248 & 0.0168 & 0.0055 &
0.044 & 0.121 & 0.076 & 0.081 \\
\bottomrule
\end{tabular}
\end{table*}

\begin{table*}[!ht]
\centering
\caption{Sachdeva-star results at $q=250$ right-hand sides per factor.
Iteration counts are means over the $250$ solves, and $T_{250}$
includes factor construction and $250$ PCG solves to relative residual
$10^{-8}$. For every instance--method pair, each entry is the median
over three independently seeded factor draws. Speedups are ratios of
the displayed total times. The PCG iteration cap is $5\times10^{3}$
and is never reached.}
\label{tab:sachdeva}
\footnotesize
\begin{tabular}{l r rrrr rrrr rr}
\toprule
& & \multicolumn{4}{c}{Iterations per solve} &
\multicolumn{4}{c}{$T_{250}$ (s)} &
\multicolumn{2}{c}{Speedup} \\
\cmidrule(lr){3-6}\cmidrule(lr){7-10}\cmidrule(lr){11-12}
Scale & nnz &
\AC & \ACtwo & \CASTone & \CASTtwo &
\AC & \ACtwo & \CASTone & \CASTtwo &
$\frac{AC}{\CASTone}$ &
$\frac{ACtwo}{\CASTtwo}$ \\
\midrule
$k=50$ & 62{,}551 & 50.8 & 27.6 & 37.9 & 25.2 &
0.88 & 0.46 & 0.55 & 0.37 & $1.58\times$ & $1.23\times$ \\
$k=100$ & 500{,}101 & 108.9 & 37.9 & 68.3 & 28.9 &
15.8 & 5.49 & 7.77 & 3.32 & $2.04\times$ & $1.65\times$ \\
$k=200$ & 4{,}000{,}201 & 279.1 & 51.9 & 139.0 & 36.3 &
349 & 66.6 & 127 & 33.2 & $2.75\times$ & $2.00\times$ \\
$k=400$ & 32{,}000{,}401 & 693.4 & 58.6 & 287.9 & 42.8 &
7{,}519 & 638 & 1{,}986 & 298 & $3.79\times$ & $2.14\times$ \\
$k=600$ & 108{,}000{,}601 & 1{,}120.8 & 65.0 & 352.9 & 44.7 &
42{,}804 & 2{,}539 & 8{,}072 & 1{,}033 &
$5.30\times$ & $2.46\times$ \\
\midrule
\multicolumn{12}{l}{\CASTtwo is the fastest method on every instance and every solve count measured.} \\
\bottomrule
\end{tabular}
\end{table*}


\subsection{IPM Sequences on Spielman Graphs}
\label{sec:spielman}

\paragraph{Collection.}
The second IPM family from the SDDM2023 benchmark
suite~\cite{AC} forms a scaling ladder with one Spielman graph for each size
parameter $k\in\{100,200,300,400\}$. Each graph contributes the
Laplacians from the ten Newton steps of a maximum-flow
interior-point-method run, yielding $40$ systems in total. The systems range from
$n=3.4\times10^{5}$ to $2.1\times10^{7}$ vertices and contain up to
$6.4\times10^{7}$ nonzeros.

These graphs are structurally close to trees but have highly
heterogeneous edge weights. Most of their vertices have
degree at most two, while the number of edges beyond a spanning tree
ranges from $5{,}100$ to $80{,}400$. Each graph also contains $k$
hub vertices of input degree $2k$. Across the IPM trajectory, edge
weights span between $9.1$ and $14.3$ orders of magnitude, increasing primarily during the first approximately four Newton steps and then
stabilizing.

Despite the high input degrees of the hubs, minimum-degree elimination produces no large pivots. Path vertices are eliminated first, and every hub has degree at most five when it is eventually eliminated. The mean pivot degree is $2.00$ at every scale. Both CAST variants produce factors of the same measured size.

\paragraph{Configuration.}
We use the protocol and metrics of
Section~\ref{sec:empirical_methods}. Unless otherwise stated, each
factor is evaluated on $q=250$ right-hand sides. For every
system--method pair, the reported value is the median over five
independently seeded factor draws. For each scale ($10$ systems) we report the mean total workload over its systems
and take speedups as ratios of these means, which measures the
aggregate cost of processing an entire IPM sequence. Win counts give
the complementary per-system view. The larger $k\in\{500,600\}$ sequences, containing approximately
$1.5\times10^{8}$ and $2.2\times10^{8}$ nonzeros, exceed the memory
capacity of the benchmark machine for every method under the
multi-solve protocol and are therefore excluded.

\paragraph{$\rho=2$: fast iteration convergence.}
\CASTtwo converges in one to two PCG iterations on every recorded
solve across all four problem scales. The worst  verified
relative residual is $5.5\times10^{-9}$, which is below the requested
tolerance of $10^{-8}$. At $q=250$, the
available comparisons give mean speedups of
$4.04$--$4.72\times$ over \ACtwo.

The normalized solve cost of \CASTtwo remains between $0.0054$ and
$0.0055\,\mu\mathrm{s}/\mathrm{nnz}$ on the reported scales. The
improved convergence does not increase the measured factor size:
\CASTone and \CASTtwo both produce factors containing approximately
$0.66$ times as many nonzeros as the input, and their construction
costs are similar at the reported scales. 

This behavior is associated with the uniformly small pivot degrees.
At a degree-$d$ pivot, where $d\leq5$ throughout this family,
\CASTone samples a spanning tree with $d-1$ edges. In contrast,
\CASTtwo samples one spanning tree on $2d$ auxiliary copies,
containing $2d-1$ auxiliary edges before within-block edges are
discarded and parallel terminal-edge contributions are aggregated.
Two-way splitting therefore increases local sampling
granularity while remaining inexpensive when $d$ is small.

The low-iteration convergence is an empirical property of this graph
family. It does not imply that \CASTtwo reproduces every Schur clique
exactly, nor is \CASTtwo the union of two independent trees on the
original terminal neighborhood. Rather, the resulting factor is
sufficiently accurate for PCG to satisfy the requested tolerance
after one to two iterations on every tested Spielman system.

\paragraph{$\rho=1$: reliability on extreme-weight systems.}
The base $\rho=1$ estimator is less consistent on these
extreme-weight, near-tree systems, where an unlucky draw can produce a
noticeably worse factor. At $k=100$ this is visible in the aggregate,
which favors \AC{} despite \CASTone{} being faster on nine of ten
systems: one system dominates the scale total
(Table~\ref{tab:spielman_tot}).

\paragraph{Comparison with Chimera-IPM.}
The Spielman and Chimera families exhibit complementary behavior. On
the Chimera instances with heavy elimination-degree tails
(Section~\ref{sec:chimera}), the additional local contributions
introduced by $\rho=2$ propagate through subsequent eliminations and
substantially increase construction cost. On the Spielman family,
every pivot has degree at most five, so the same increase in local
sampling granularity remains inexpensive and yields convergence in one to two iterations.

Together, these results support a degree-aware empirical choice of
the splitting factor. A uniformly light elimination-degree
distribution favors $\rho=2$ when the factor is reused, whereas
$\rho=1$ is preferable when high-degree pivots make downstream fill
and construction cost dominant. We treat this as an empirical
selection heuristic rather than as an evaluated adaptive algorithm.



\subsection{Sachdeva-star stress tests}
\label{sec:sachdeva}
\paragraph{Collection.}
The Sachdeva-star family is a synthetic construction on which the \AC{} estimator is known to require rapidly increasing iteration counts~\cite{AC}. For an even parameter $k$, each instance consists of a hub connected by unit-weight gateway edges to $k/2$ disjoint cliques of size $k$. All edges have unit weight. The resulting Laplacian has
\[
n = 1+\frac{k^{2}}{2}
\qquad\text{and}\qquad
\mathrm{nnz} = \frac{k^{3}}{2}+k+1.
\]
We evaluate
\[
k\in\{50,100,200,400,600\},
\]
corresponding to $n=1{,}251$ through $180{,}001$ vertices and up to $108{,}000{,}601$ stored non\-zeros.

Unlike the Spielman family, whose difficulty arises from extreme weights on near-tree graphs, the Sachdeva-star instances are unweighted and structurally challenging. Eliminations within the clique blocks create dense local neighborhoods, and the quality of the base-granularity estimators deteriorates as $k$ increases. 

\paragraph{Configuration.}
We use the protocol and metrics of Section~\ref{sec:empirical_methods}. Each factor is evaluated on $q=250$ right-hand sides, and for every instance--method pair the reported value is the median over three independently seeded factor draws. Because iteration growth is a primary quantity of interest on this family, we raise the PCG iteration cap from $10^{3}$ to $5\times10^{3}$. No method reaches this cap.  Every solve satisfies the verified residual criterion
\[
\frac{\lVert A\,\mathbf{x}-\mathbf{b}\rVert_{2}}
{\lVert\mathbf{b}\rVert_{2}}
\leq 10^{-8}.
\]

\paragraph{Results.}
Table~\ref{tab:sachdeva} shows a clear separation between the base- and doubled-granularity variants. The mean iteration count of \AC increases from $50.8$ at $k=50$ to $1{,}120.8$ at $k=600$. \CASTone also becomes less effective as the instances grow, but its iteration count increases more slowly, from $37.9$ to $352.9$.
The doubled-granularity variants are substantially more stable. Across the same range, \ACtwo increases from $27.6$ to $65.0$ iterations per solve, while \CASTtwo increases from $25.2$ to $44.7$. At $q=250$, \CASTtwo is faster than \ACtwo on every instance, with speedup increasing monotonically from $1.23\times$ at $k=50$ to $2.46\times$ at $k=600$. Relative to \AC, its total-time advantage at the largest instance is
\[
\frac{42{,}804}{1{,}033} \approx 41.4\times.
\]

A separate solve-count sweep shows that \CASTtwo{} is already the
fastest method at $q=1$, so its advantage on this family does not
depend on amortizing construction cost.

\paragraph{Interpretation.}
Unlike on the Spielman family, \CASTtwo does not reduce these systems to one or two PCG iterations. The clique-dominated elimination structure remains nontrivial after contraction, and its mean iteration count grows from $25.2$ to $44.7$ along the ladder. Nevertheless, the growth is mild compared with the base-granularity methods. Although clique blocks produce high-degree pivots, the additional
contributions of $\rho=2$ fall within blocks that are already dense,
so they do not propagate as new fill --- the opposite of the Chimera
instances with heavy degree tails, where the extra contributions land
on sparse neighborhoods and compound through later eliminations. This behavior is consistent with the local second-moment improvement of the $\rho=2$ construction: finer splitting reduces local sampling variability, although the theory does not by itself imply a global iteration bound.
The Spielman and Sachdeva families therefore illustrate two distinct benefits of $\rho=2$. On the low-degree Spielman eliminations, \CASTtwo empirically produces an almost exact pre\-conditioner and converges in one to two iterations. On the clique-dominated Sachdeva instances, it does not eliminate iteration growth, but it limits that growth sufficiently to provide the best total time throughout the tested ladder. These results agree qualitatively with those reported in~\cite{AC}:
the method ordering and the growth of the base-granularity iteration
counts are unchanged, although absolute iteration counts here are
approximately $15\%$--$30\%$ higher, since our right-hand sides are
Gaussians projected onto $\mathbf 1^\perp$ rather than the range-restricted vectors used there.

\end{document}